\newcommand{\changed}[1]{#1}
\providecommand{\abs}[1]{\lvert#1\rvert}
\providecommand{\norm}[1]{\lVert#1\rVert}
\newcommand{\bracedincludegraphics}[2][]{%
  \sbox0{$\vcenter{\hbox{\includegraphics[#1]{#2}}}$}%
  \left\lbrace
    \vphantom{\copy0}
  \right.\kern-\nulldelimiterspace
  \underbrace{\box0}}
\begin{document}
\title{High-Rank Irreducible Cartesian Tensor Decomposition and Bases of Equivariant Spaces}

\author{\name Shihao Shao$^\dagger$\email shaoshihao@pku.edu.cn \\
       \addr State Key Lab of Vascular Homeostasis and Remodeling\\
       \addr Department of Bioinformatics\\
       \addr School of Basic Medical Sciences, Peking University
\AND
    \name Yikang Li \email liyk18@pku.edu.cn \\
       \addr State Key Lab for General AI\\
       \addr School of Intelligence Science and Technology \\
       \addr Peking University, China
\AND
    \name Zhouchen Lin$^\dagger$\email zlin@pku.edu.cn \\
       \addr State Key Lab for General AI\\
       \addr School of Intelligence Science and Technology, Peking University\\
       \addr Pazhou Laboratory (Huangpu), China
\AND
    \name Qinghua Cui$^\dagger$\email cuiqinghua@bjmu.edu.cn \\
       \addr State Key Lab of Vascular Homeostasis and Remodeling\\
       \addr Department of Bioinformatics\\
       \addr School of Basic Medical Sciences, Peking University
       }

\footnotetext[2]{To whom correspondence should be addressed}

\editor{Michael Mahoney}

\maketitle

\begin{abstract}
Irreducible Cartesian tensors (ICTs) play a crucial role in the design of equivariant graph neural networks, as well as in theoretical chemistry and chemical physics. Meanwhile, the design space of available linear operations on tensors that preserve symmetry presents a significant challenge. The ICT decomposition and a basis of this equivariant space are difficult to obtain for high-rank tensors. After decades of research, \cite{bonvicini2024irreducible} has recently achieved an explicit ICT decomposition for $n=5$ with factorial time/space complexity. In this work we, for the first time, obtain decomposition matrices for ICTs up to rank $n=9$ with reduced and affordable complexity, by constructing what we call path matrices. The path matrices are obtained via performing chain-like contractions with Clebsch-Gordan matrices following the parentage scheme. We prove and leverage that the concatenation of path matrices is an orthonormal change-of-basis matrix between the Cartesian tensor product space and the spherical direct sum spaces. Furthermore, we identify a complete orthogonal basis for the equivariant space, rather than a spanning set \citep{pearce2023brauer}, through this path matrices technique. \changed{Our method avoids the RREF algorithm and maintains a fully analytical derivation of each ICT decomposition matrix, thereby significantly improving the algorithm’s speed to obtain arbitrary rank orthogonal ICT decomposition matrices and orthogonal equivariant bases.} We further extend our result to the arbitrary tensor product and direct sum spaces, enabling free design between different spaces while keeping symmetry. The Python code is available at https://github.com/ShihaoShao-GH/ICT-decomposition-and-equivariant-bases, where the $n=6,\dots,9$ ICT decomposition matrices are obtained in 1s, 3s, 11s, and 4m32s on 28-core Intel\textsuperscript{\textregistered} Xeon\textsuperscript{\textregistered} Gold 6330 CPU @ 2.00GHz, respectively.
\end{abstract}

\begin{keywords}
  irreducible Cartesian tensor decomposition, Clebsch-Gordan coefficient, parentage scheme, equivariant graph neural network, equivariant linear mapping
\end{keywords}

\section{Introduction}
\label{sec:introduction}

The deployment of equivariant neural networks in physics, chemistry, and robotics often requires certain equivariance or invariance to effectively incorporate inductive bias. \changed{One typical equivariance requirement is equivariance under the $O(3)$ group (rotations and reflections).} Equivariant Graph Neural Networks (EGNNs) are well-suited for this purpose, as they can maintain equivariance while processing features within the graph \citep{zaverkin2024higher,simeon2024tensornet,batzner20223}. One of the most promising approaches to constructing equivariant layers involves leveraging irreducible representations. Depending on the basis of the space, two distinct types of irreducible representations are utilized in different EGNNs: spherical (spherical tensors) \citep{musaelian2023learning,batatia2022mace} and Cartesian irreducible representations (Irreducible Cartesian Tensors, ICTs) \citep{zaverkin2024higher,schutt2021equivariant} for $O(3)$. Spherical tensors serve as the starting point for EGNNs based on irreducible representations. However, they require Clebsch-Gordan transforms with significant computational overhead to derive desired irreducible representations from existing ones during each inference, making them computationally intensive. In contrast, ICTs are more efficient, as high-rank representations can be easily computed using tensor products (specifically, outer products in EGNNs) and contractions, making them more computationally friendly for most deep learning frameworks. The ICTs are obtained through the ICT decomposition matrices $H$,

\begin{equation}
\label{eq:tht}
    {\rm vec}\left(\hat{T}^{(n;l;q)}\right) = H^{(n;l;q)} {\rm vec}\left(T^{(n)}\right),
\end{equation}

\noindent where $\hat{T}^{(n;l;q)}$ is an ICT, ${\rm vec}$ is the vectorization operation via flattening the tensor, $T^{(n)}$ is a given tensor, $n$ represents the rank of the tensor, and $l$ is called the weight (the order of irreducible representations, which is distinct from the learnable weights in neural networks) of the irreducible representation, and $q$ denotes the label (since the multiplicity for given $n$ and $l$ is sometimes greater than $1$). \changed{In this paper, we adopt the convention that $\operatorname{vec}$ flattens a tensor by stacking its entries from the rightmost to the leftmost index; concretely, ${\rm vec}(T) = \{T_{...111},T_{...112},\cdots\}$.} The decomposition matrices $H^{(n;l;q)}$ should be constructed such that $\hat{T}^{(n;l;q)}$ are irreducible. A formal definition is introduced in Section \ref{sec:ictdecomp}. The challenge lies in the fact that $H^{(n;l;q)}$ of rank $n>5$ are unknown. The recent development of EGNNs research benefits from the irreducible decomposition matrices $H$ obtained from the physics and chemistry communities. The decomposition matrices for $n=4$ have been established by \cite{andrews1982irreducible}. However, only very recently has \cite{bonvicini2024irreducible} obtained the decomposition matrices for $n=5$. 
\changed{Due to the lack of high-rank ICT decomposition matrices, it is impossible to construct high-rank Cartesian EGNNs that incorporate valid equivariant linear layers. That is because only irreducible representations of same weight can be linearly combined, but the irreducible representations under Cartesian basis are mixed together. In other words, the irreducible representations of the same weight are impossible to be separated, so we cannot linearly combine those of the same type, which limits the available rank of irreducible representations in EGNNs with full access to the equivariant linear layer designs. The fact that utilizing high-rank irreducible representations improves performance has been well investigated by predecessors, with the support of several empirical evidence. For example, see Table 2 in the paper by \cite{batzner20223}, where rank $l=3$ NequIP significantly outperforms the same model with lower ranks. On the other hand, in the work done by \cite{liao2023equiformerv2}, Table 1(c) shows that the performance is better with higher ranks.} 
Therefore, the matrices for high-rank orthogonal ICT decomposition are of great importance. Based on this, we can obtain valid equivariant linear operations applied to the Cartesian tensors, as these operations constitute the design space of EGNNs. In other words, what is the vector space of linear mappings $f$ such that

\begin{equation}
\label{eq:equivbasisdef}
    \rho(g)f\left({\rm vec}(T)\right)=f\left(\rho(g){\rm vec}(T)\right)
\end{equation}

\noindent where $g \in G$, \changed{an element of a group $G$. $T\in \left(\mathbbm{R}^3\right)^{\otimes n}$ is an arbitrary rank-$n$ tensor. $\rho:G\to GL\left(\left(\mathbbm{R}^3\right)^{\otimes n}\right)$ is the group representation, and $GL(V)$ is a space consisting of the invertible matrices that operate on vector space $V$. A detailed introduction to the basic and necessary group theory is in Section \ref{sec:irreduciblerepandsphericalspace}}. This vector space is called an equivariant space, also denoted as ${\rm End}_{G}\left(\left(\mathbbm{R}^3\right)^{\otimes n}\right)$. More generally, for different input and output spaces, this equivariant space can be denoted as ${\rm Hom}_{G}\left(\left(\mathbbm{R}^3\right)^{\otimes l},\left(\mathbbm{R}^3\right)^{\otimes k}\right)$. \cite{pearce2023brauer} leverages \cite{brauer1937algebras}'s theorem to find the spanning set of the equivariant space. However, as the rank increases, the number of elements in the spanning set becomes greater than the dimension of the space. Thus, it is also essential to identify a basis to reduce the number of parameters required.

In this work, one of our contributions is the proposal of a method that does not rely on RREF algorithm, for efficiently obtaining ICT decomposition matrices, through which we achieve $6 \leq n \leq 9$ in an affordable setting. Several previous methods \citep{andrews1982irreducible, dincckal2013orthonormal, bonvicini2024irreducible} rely on evaluating the full permutation of $n$ (e.g., fundamental isotropic Cartesian tensors) and $2n$ (e.g., natural projection tensors) indices, which requires factorial complexity, making it impractical for application to high-rank tensors. \cite{bonvicini2024irreducible} proposes a novel method to automate the procedure but does not reduce the complexity. The Reduced Row Echelon Form (RREF) algorithm is also required to run on an $n! \times n!$ matrix. Instead, we leverage the fact that the Cartesian tensor product space is equivalent to a direct product of spherical spaces, up to a change of basis. In spherical space, it is straightforward to identify the desired irreducible representations, as they are diagonalized; then, the inverse change-of-basis can bring us back to Cartesian space. The challenge lies in finding such a change-of-basis matrix. We prove that one can construct this matrix by concatenating the results of sequential contractions and the outer products of Clebsch-Gordan matrices (CG matrices) according to the parentage scheme \citep{coope1965irreducible}, which we refer to as path matrices. 
\changed{The orthogonality of the path matrices further allows us to eliminate the time-consuming inverse operation, rendering this approach an orthogonal decomposition. Both theoretical and experimental analyses confirm the efficiency of this approach.}
\changed{Inspired by Schur’s lemma, which states that only irreducible representations of the same weight can be linearly combined, we construct an equivariant basis via a path-matrix technique. This construction makes it possible to design high-rank equivariant linear layers in EGNNs. We then address the lack of a suitable parentage scheme for general spaces. To this end, we propose a generalized parentage scheme that yields an equivariant basis for arbitrary spaces. These advances collectively enable the efficient construction of equivariant layers across different spaces, thereby expanding the design space of EGNNs.}


\begin{table}[]
\centering
\setlength{\tabcolsep}{5pt}
\renewcommand{\arraystretch}{1.5}

\begin{tabular}{lcc|cc}
\hline
\multirow{2}{*}{Method} & \multirow{2}{*}{Rank} & \multirow{2}{*}{Orthogonality} & \multicolumn{2}{c}{Time and Space Complexity} \\ \cline{4-5}
& & & w.r.t. rank & w.r.t. dimension \\ \hline
\cite{coope1965irreducible}    & $\leq3$ & No  & \multicolumn{2}{c}{calculated by hand} \\
\cite{andrews1982irreducible}  & $\leq4$ & No  & \multicolumn{2}{c}{calculated by hand} \\
\cite{dincckal2013orthonormal} & $\leq3$ & Yes & \multicolumn{2}{c}{calculated by hand} \\

\cite{bonvicini2024irreducible}& $\leq5$ & No  & 
   \multicolumn{2}{c}{
       \begin{tabular}{@{}c@{\,/\,}c@{}}
           factorial & sub-exponential
       \end{tabular}
   } \\

Ours                           & $\leq9$ & Yes & 
   \multicolumn{2}{c}{
       \begin{tabular}{@{}c@{\,/\,}c@{}}
           exponential & polynomial
       \end{tabular}
   } \\ \hline

\end{tabular}
\caption{Comparison with previous works. The time and space complexities are with respect to rank and dimension, respectively.}
\label{tab:rwcompare}
\end{table}


\vspace{0.2cm}

In summary, this work has five main contributions:

\begin{itemize}
\item We propose a method to efficiently obtain high-rank ICT decomposition matrices and, for the first time, obtain the decomposition matrices for $6 \leq n \leq 9$, significantly improving upon the current limit of $n = 5$, as shown in Table \ref{tab:rwcompare}.
\item We propose a general parentage scheme to extend ICT decomposition into a generalized ICT decomposition applicable to arbitrary tensor product spaces.
\item We propose to directly obtain orthogonal bases of equivariant spaces, whereas the method proposed by \cite{pearce2023brauer} yields a spanning set, requiring an expensive post-processing step to extract a basis that is not orthogonal in general.
\item \changed{Our method avoids the RREF algorithm and derives each ICT decomposition matrix in a fully analytical manner, thereby yielding a significantly more efficient algorithm.}
\item We extend our approach to efficiently obtain bases of equivariant spaces mapping between different spaces, including spherical tensor product spaces.
\end{itemize}

These contributions enable EGNNs to process high-rank ICTs and flexibly design equivariant layers between desired spaces. Furthermore, this work benefits the physics and chemistry communities by facilitating theoretical analysis based on high-rank ICTs.

\newpage

\begin{longtable}{|m{1.6cm}|m{5cm}|m{1.6cm}|m{5cm}|} 
\caption{Mathematical symbols and their meanings} \label{tab:symbols}\\ 
\hline
\textbf{Symbol} & \textbf{Meaning} & \textbf{Symbol} & \textbf{Meaning} \\ \hline
\endhead
\hline
\endfoot

\hline
\(\norm{\cdot}_{col}\) & The $L_2$-norm of an arbitrary column of $\cdot$ & \(G\) & An arbitrary group  \\ \hline
\(SO(n)\) & Special orthogonal group on $n$-dimensional space & \(SU(n)\) & Special unitary group on $n$-dimensional space \\ \hline
\(O(n)\) & Orthogonal group on $n$-dimensional space & \(U(n)\) & Unitary group on $n$-dimensional space \\ \hline
$\mathcal{V}$ & A vector space & \(\rho_{\mathcal{V}}\) & A group representation\\ \hline
$\mathcal{V}^{\otimes n}$ & A rank $n$ tensor product space from $\mathcal{V}$ & \(\changed{N^{(n;l)}}\) & The multiplicity of the ICT of rank $n$ and \changed{weight $l$}\\ \hline
\(T\) & An arbitrary tensor \(G \to \mathcal{V}\) & \(T^{(n)}\) & A Cartesian tensor of rank \(n\) \\ \hline
\(\changed{\hat{T}^{(n;l;q;p)}}\) & An ICT of rank \(n\), \changed{weight \(l\)}, index \(q\), and parity $p$ (sometimes omitted) & $Q^{(l;p)}_{\mathcal{V}}$ & The set of all possible paths to decompose a tensor in space $\mathcal{V}$ \\ \hline
\(\changed{H^{(n;l;q)}}\) & An ICT decomposition matrix to obtain ICTs of rank \(n\), \changed{weight \(\l\)}, and index \(q\) & $\changed{\hat{H}^{(\mathcal{V};l;q)}}$ & A general ICT decomposition matrix of space $\mathcal{V}$, \changed{weight $l$} and index $q$  \\ \hline
\(C^{l_1l_2l_o}_{m_1m_2m_o}\) & \multicolumn{3}{|m{12.6cm}|}{A CG coefficient with input weights \(l_1\), \(l_2\), output weight \(l_o\), and the corresponding indices \(m_1\), \(m_2\), and \(m_o\) (the angular-momentum quantum numbers in physics)} \\ \hline
$C^{l_1l_2l_o}$ & A CG tensor with input weights $l_1$, $l_2$, and output weight $l_o$ & $\hat{C}^{l_1l_2l_o}$ & A CG matrix by merging the first two dimensions of the original CG tensor\\ \hline
$P^{(path)}$ & A path matrix generated by $path$, e.g., $(1\to 3\to 2)$ & $\hat{P}^{(path)}$ & A normalized path matrix generated by $path$\\ \hline
$D^{(\mathcal{V}\to \mathcal{V}^\prime)}$ & A change-of-basis matrix from space $\mathcal{V}$ to $\mathcal{V}^\prime$ & $v^{(l;p)}$ & A vector on which weight $l$ and parity $p$ irreducible representation acts\\ \hline
${\rm Hom}_G$ & The set of $G$-equivariant maps from one space to another & ${\rm End}_G$ & The set of $G$-equivariant maps from one space to itself\\ \hline
irr & Returns all weights of irreducible representations in a decomposition & ${\rm dim}$ & The number of dimensions of a space  \\ \hline

$(i_1\overset{k_1}{\to}i_2 $ $\overset{k_2}{\to} \dots)$ & A path where $k_*$ are bridge numbers (omitted if all being $1$) & \( (l,p)\) & An irreducible representation space of weight $l$ and parity $p$ \\ \hline
\(\otimes\) & Tensor product & \(\oplus\) & Direct sum \\ \hline
\(\odot\) & Contraction & $\#$ & Number of elements in a set \\ \hline

\end{longtable}
\newpage

The following contents are organized as follows. \changed{In Section \ref{sec:rep}, we recall basic concepts from representation theory, spherical harmonics, tensors, and tensor product spaces, and we introduce the CG coefficients along with the parentage scheme.} In Section \ref{sec:problem}, we formulate our problems: obtaining high-rank ICT decomposition matrices and bases for arbitrary tensor product spaces. In Section \ref{sec:highorderict}, we introduce our theory and method to obtain high-rank ICT decomposition matrices. In Section \ref{sec:basis}, we present the theory and method to obtain bases of equivariant spaces where the input and output spaces are the same. In Section \ref{sec:extension}, we extend the theory to the equivariant design spaces where the input and output spaces are different, and show the method to obtain bases for such general spaces. In Section \ref{sec:relatedworks}, we review related works on designing EGNNs and calculating ICT decomposition matrices. In Section \ref{sec:discussion}, we discuss the relation between \cite{pearce2023brauer} and this work, as well as the theory for $SO(n)$, $O(n)$, and $SU(n)$. We provide concluding remarks in Section \ref{sec:conclusion}, and introduce the reproducibility details in Section \ref{sec:repro}. The acknowledgments are presented after the main text. \changed{The proofs of supportive lemmas are shown in Appendix \ref{sec:proofs}.} The complexity comparison with \cite{bonvicini2024irreducible} is in Appendix \ref{sec:complexityanalysis}. \changed{We give an example for generating rank-2 ICT decomposition
matrices in Appendix \ref{sec:examplerank2}.} The visualizations of ICT decomposition matrices and equivariant bases are shown in Appendix \ref{sec:visualization}. A list of mathematical symbols is provided in Table \ref{tab:symbols}, and a mind map of this work is shown in Figure \ref{fig:mindmap}.

\section{Preliminaries}
\label{sec:rep}
In this section, we recap necessary mathematical preliminaries.

\subsection{Group, Irreducible Representations and Spherical Spaces}
\label{sec:irreduciblerepandsphericalspace}

\changed{We first recall some basic facts of group theory}. A thorough introduction to group theory can be found in \cite{inui2012group} and \cite{brocker2013representations}. \changed{A group $G$ is a set equipped with a binary operation $*$, such that the following five properties hold. 
\begin{itemize}  
  \item \textbf{Non-emptiness}:\\
        The set \(G\) is non-empty.
  \item \textbf{Closure}:\\
        For all \(g_1, g_2 \in G\), we have $g_1 * g_2 \in G$.        
  \item \textbf{Associativity}:\\
        For all \(g_1, g_2, g_3 \in G\), we have $(g_1 * g_2) * g_3 = g_1 * (g_2 * g_3)$.
  \item \textbf{Identity element}:\\
        There exists \(e \in G\) such that, for every \(g \in G\), it holds that $e * g = g * e = g$.
  \item \textbf{Inverse element}:\\
        For every \(g \in G\) there exists \(g^{-1} \in G\) satisfying $g * g^{-1} = g^{-1} * g = e$.
\end{itemize}
Group is a powerful tool to study symmetry. Rotations and rotations with reflections are both examples of groups. For example, it is easy to verify that 2-d rotations satisfy the above requirements: clockwise rotating by $30^{\circ}$, followed by counterclockwise $15^{\circ}$, is equal to clockwise rotating by $15^{\circ}$, and the remained properties can be examined similarly. Until now, the group remains abstract, and we cannot bring it into computation. Representation theory provides us with a bridge to connect group to linear algebra. A representation is a mapping $\rho:G\to GL(V)$ such that 
\begin{equation*}
    \rho(g_1*g_2) = \rho(g_1)\cdot\rho(g_2)
\end{equation*}
holds for any $g_1,g_2\in G$, where $GL(V)$ is a space consisting of the invertible matrices that operate on vector space $V$. In other words, they are invertible matrices that operate on vector space $V$. Here, we introduce groups that we will consider in this work. The orthogonal group, $O(n)$, consisting of orthogonal matrices in $\mathbbm{R}^{n\times n}$, represents $n$-dimensional rotations and reflections; The special orthogonal group, $SO(n)$, consisting of orthogonal matrices with determinant equaling $1$, represents $n$-dimensional rotations; The unitary group, $U(n)$, consisting of unitary complex matrices in $\mathbbm{C}^{n\times n}$, represents $n$-dimensional rotations and reflections on $\mathbbm{C}^{n\times n}$, and $SU(n)$ with additional unit determinant constraint, represents rotations only.}  Consider a vector $v \in \mathcal{V}$. 
A group element $g$ acting on the vector $v$ can therefore be represented as a matrix-vector multiplication $\rho(g) \cdot v$. Sometimes, one may also refer to the space of $v$ as a representation space. If the results of the group action remain in the same space, i.e., $\rho(g)v \in \mathcal{V}$, this space is also called a $G$-invariant space or subspace. 
It is important to note that the co-domain of $\rho$ can differ, meaning that it can also be represented by matrices of other dimensions. In this paper, we denote the co-domain of $\rho$ using a subscript when it should be made explicit, e.g., $\rho_{\mathbbm{R}^3}$. Now, it suffices to define equivariance. We claim that a linear function $f: \mathcal{V} \to \mathcal{V}^\prime$ is equivariant if and only if

\begin{equation*}
    \rho_{\mathcal{V}^\prime}f(x)=f(\rho_{\mathcal{V}}(x)).
\end{equation*}
We first focus on the simpler case where $\mathcal{V} = \mathcal{V}^\prime$. We will later generalize this to the case where $\mathcal{V} \neq \mathcal{V}^\prime$. When it is necessary to emphasize the change in space, we refer to a linear mapping $f$ as $G$-$(\mathcal{V}, \mathcal{V}^\prime)$-equivariant. We then introduce the definition of reducible and irreducible representation.

\definition{ \label{def:reducible}
    A group representation $\rho_{\mathcal{V}}: G\to GL(\mathcal{V})$ is said to be reducible, if there is a non-zero subspace $\mathcal{V}^\prime\subset \mathcal{V}$, for which we have a representation $\rho_{\mathcal{V}^\prime}$, such that for every $g\in G$ and $v\in \mathcal{V}^\prime$, it holds that
    \begin{equation*}
        \rho_{\mathcal{V}^\prime}(g)v\in \mathcal{V}^\prime,
    \end{equation*}
    where we also call $\mathcal{V}^\prime$ a $G$-invariant subspace. Conversely, if there is no such subspace $\mathcal{V}^\prime$ and the corresponding representation $\rho_{\mathcal{V}^\prime}$ on it for a given space $\mathcal{V}$ satisfying the above conditions, then we claim that $\rho_{\mathcal{V}}$ is irreducible. }
\vspace{0.2cm}

\noindent If a representation is irreducible, then the vectors space it acts on is also said to be irreducible. The $3\times 3$ rotation and reflection matrices are irreducible. More generally,

\proposition{\label{prop:naturalrepisirreducible}
The natural representation $\{ M\in GL(n, \mathbbm{R}) \mid MM^{\top}=M^\top M=I\}$ of $O(n)$ is irreducible.
}
\vspace{0.2cm}

 Irreducible representations play a very important role in theoretical chemistry and physics, as they represent different symmetries. It is also one of the major building blocks for EGNNs in recent years. Note that the matrix representation will change if we change the basis of the space $\mathcal{V}$ it acts on. Thus, the representation will be different according to the basis we choose. In this paper, we first focus on the $O(3)$ group, but we also extend our conclusions to $O(n)$, $SO(n)$, $U(n)$, and $SU(n)$ in Section \ref{sec:othergroups}. For the spherical basis, each irreducible representations space is spanned by the \textbf{real} spherical harmonics $Y_{lm}$ as a basis of different degree $l$ and order $m$. The complex spherical harmonics $Y_l^m: S^2\to \mathbbm{C}$ are the solutions of Laplace's equation, which are in the form of associated Legendre polynomials $P_l^m$ taking spherical coordinates $(\theta, \varphi)$ as variables (or equivalently $(x,y,z)$ after some change-of-basis). $l$ can take values from $\{\frac{1}{2},1,\frac{3}{2},\dots\}$. They are highly related to $SU(2)$ group, which is of great importance for quantum physics. Leveraging the fact that $SU(2)$ is a double-cover of $SO(3)$, analysis implies that the real form of spherical harmonics $Y_{lm}$ of integer $l$ form a mapping from $S^2$ to $S^{2l}$, via $(x,y,z)\mapsto (Y_{l,-l}(x,y,z),\cdots,Y_{l,l}(x,y,z))$ that the weight-$l$ $O(3)$ irreducible representation (the term \textit{weights} in this paper refer to the order of irreducible representations, following the convention) operate on. In this case, $l\in\{1,2,\dots\}$ and $m\in\{-l,\dots,l\}$, and accordingly $O(3)$ has $3\times 3$, $5\times 5$, $\dots$ matrices as representations. We sometimes use $(l=k)$ to denote a weight-$k$ spherical space. As said above, an important property of spherical harmonics is:

\proposition{
\label{prop:rep}
    Given a space $\mathcal{V}$ spanned by the real spherical harmonics,
    \begin{equation*}
    \left\{(Y_{l,-l}(v),\cdots,Y_{l,l}(v) )\in S^{2l}\;\colon\; v\in S^2\right\},
    \end{equation*}
    it holds that, for any $v\in \mathcal{V}$ and $g\in O(3)$, we always have
    $\rho_{S^{2l}}(g)v\in \mathcal{V}$.
    }
    
\vspace{0.2cm}

The above proposition directly follows from the fact that the mapping $(Y_{l,-l},\cdots,Y_{l,l})$ is $O(3)$-equivariant. There is also a connection between spherical and Cartesian spaces, which is one of the building blocks of our arguments:

\lemma{
\label{lemma:firstdegree}
    The weight one real spherical space is exactly the 3-d Cartesian space up to an orthogonal change-of-basis.}
\vspace{0.2cm}

 This lemma is often utilized in some engineering implementations \citep{thomas2018tfn,weiler20183d,kondor2018clebsch}. In this work, it is a key connecting the spherical and Cartesian space. Also, we have the following properties:

\proposition{The $l$-degree real spherical harmonics are homogeneous harmonic polynomials, $Y_{lm}\in H_l(S^2)$, where $Y_{lm}$ is orthogonal to $Y_{l^\prime m^\prime}$ for $l^\prime \neq l$ or $m^\prime \neq m$, under $L^2(S^2)$. $\{Y_{lm}\}$ forms an orthonormal basis for $(2l+1)$-dimensional vector spaces, with unit L$_2$-norms.}

\subsection{Clebsch-Gordan Coefficients}

  An important tool utilized in our work is the Clebsch-Gordan coefficients (CG coefficients). They are real numbers, and describe how to couple vectors from two spherical spaces into a new one. There are some rules for this coupling. First, given two vectors in weights $l_1$ and $l_2$ spherical spaces, the weights of the resulting vectors can only lie in $|l_1-l_2|\leq l_o\leq l_1+l_2$. For each index $m_o$ of the obtained vector $v_{l_o}\in (l=l_o)$, it is calculated by homogeneous quadratic polynomials. Specifically, we have

\begin{equation}
\label{eq:CG}
    v^{(l_o)}_{m_o} = \sum_{m_o=m_1+ m_2} C^{l_1l_2l_o}_{m_1 m_2 m_o} v_{m_1}^{(l_1)} v_{m_2}^{(l_2)},
\end{equation}
and if we consider $O(n)$ instead of $SO(n)$, parity $p$ should also be considered. The parity is a property additional to the rank and weight, being $1$ or $-1$. Having $-1$ parity means that the representation will be inverse when the space is inverse, and \textit{vice versa}. For example, the natural representations of $O(n)$ always operate on $p=-1$ vector spaces. Taking the parity into consideration, we can guarantee $O(n)$ equivariance, and equation (\ref{eq:CG}) becomes

\begin{equation*}
\label{eq:CGo3}
    v^{(l_o;p_o)}_{m_o} = \mathbbm{1}_{(p_o=p_1p_2)}\sum_{m_o =m_1+m_2} C^{l_1l_2l_o}_{m_1 m_2 m_o} v_{m_1}^{(l_1;p_1)} v_{m_2}^{(l_2;p_2)},
\end{equation*}
where $\mathbbm{1}_{(exp)}$ is the indicator function, outputting $1$ if the expression $exp$ holds true, or $0$ otherwise. Our discussion will be the same no matter we consider $O(n)$ or $SO(n)$. But in and after Section \ref{sec:extension}, we need to take care of the parity which will play some roles in the further discussion. The CG coefficients are highly sparse, since $C^{l_1l_2l_o}_{m_1m_2m_o}=0$ if $m_o\neq m_1+m_2$. The coefficients are identical to Wigner-3j symbols up to multiplicative factors, so we will adopt Wigner-3j for the implementation. The evaluation of CG coefficients for $SU(2)$ is nearly $\mathcal{O}(1)$ (if we do not consider the few calculations for the closed-form formula, which is cheap enough). To obtain the coefficients of $O(3)$ or $SO(3)$, one often leverages the relation between $SU(2)$ and $SO(3)$: $SU(2)$ is a double cover of $SO(3)$, so the CG matrices of $SU(2)$ and $SO(3)$ are the same up to a change-of-basis, from complex to real. The closed-form formulae for $SU(2)$ CG coefficients are already known \citep{theoryofcomplexspectra}. Thus, this process can be efficiently evaluated without obvious overhead. 

\begin{figure*}[tb]

  \centering
  \includegraphics[width=1.0\linewidth]{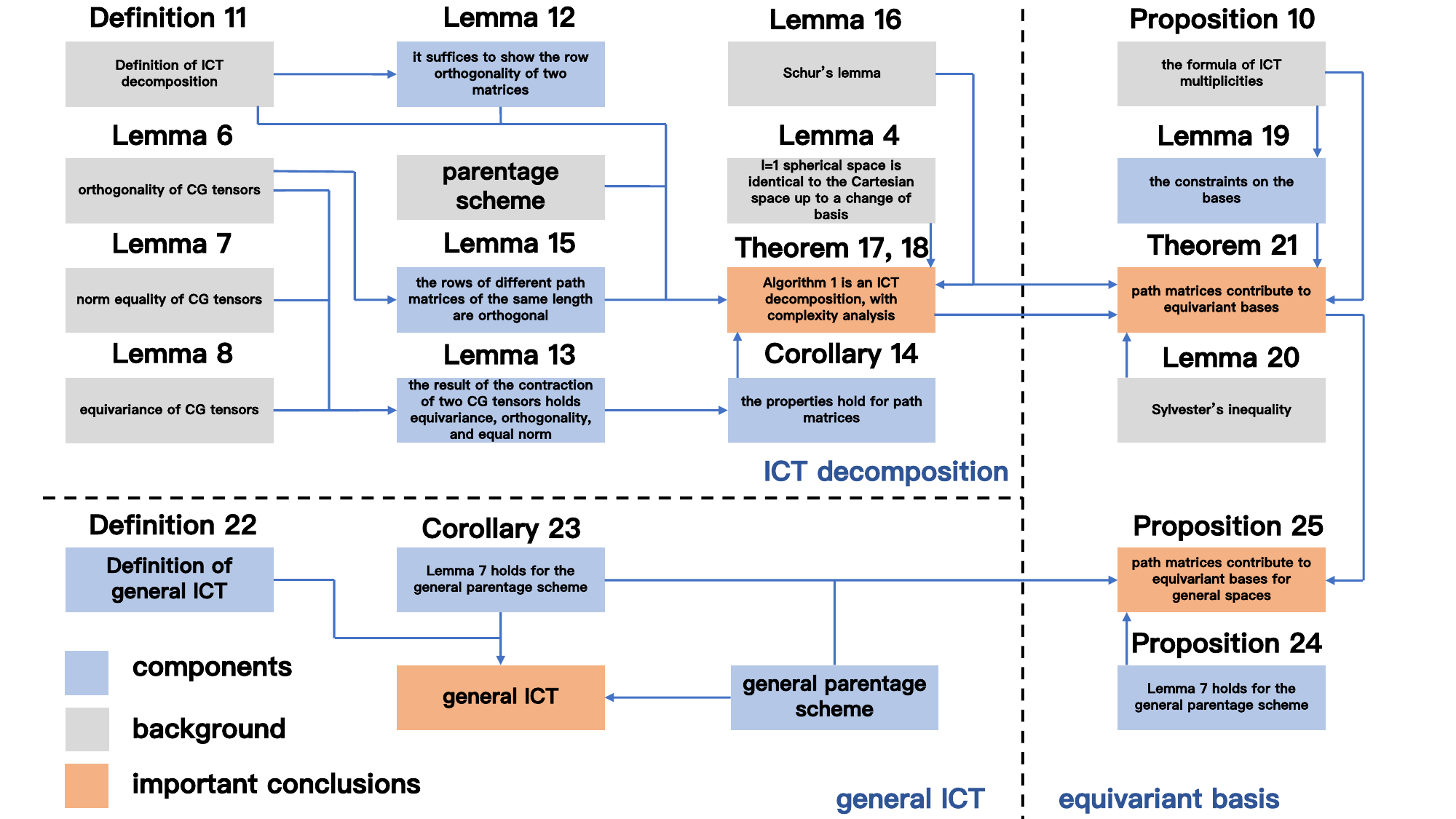}
  
  \caption{The mind map of our method. The blocks in orange color are aligned with the contributions of this work. \changed{The ICT decomposition algorithm (Theorem \ref{theorem:ict}), together with Schur’s lemma (Lemma \ref{lemma:schur}), shows that path matrices can also be used to construct an equivariant basis. Building on the fact that our ICT decomposition relies on the parentage scheme, we propose a generalized parentage scheme that allows decomposition in arbitrary spaces and thus yields equivariant bases for those spaces.}
  }
  \label{fig:mindmap}
\vspace{-0.2cm}
\end{figure*}

\subsection{Tensors and Tensor Product Spaces}
To formally define a tensor, we need to consider vector space $V$ and covector space $V^{*}$. A tensor product is a bilinear map $\otimes:\mathcal{V}\times \mathcal{V}^\prime\to \mathcal{V}\otimes \mathcal{V}^\prime$. The basis of $\mathcal{V}\otimes \mathcal{V}^\prime$ is the set of $\hat{x}_i\otimes \hat{y}_j\in \mathcal{V}\otimes \mathcal{V}^\prime$, where $\hat{x}_i$ and $\hat{y}_j$ are basis vectors of $\mathcal{V}$ and $\mathcal{V}^\prime$, respectively. That is why one implements outer product as tensor product, as it meets the definition of tensor product. A tensor $T:V^{\otimes r}\otimes V^{*\otimes s}\to F$, where $F$ is some field, is a mathematical object that converts things between different tensor product spaces. For example, the representation $\rho(g)$ is an $(r=1,s=1)$-tensor that receives a vector $v$ which is an $(r=0,s=1)$-tensor, and output another $(r=0,s=1)$-tensor $\rho(g)v$, as $r$ and $s$ vanish correspondingly. Tensor product can help us raise the ranks of tensors, e.g., $V\otimes V \otimes V$ is an $(r=0,s=3)$ space. The representation space is accordingly $(r=3,s=3)$. Conversely, contraction can reduce the ranks of tensors. We express these operations via some index notations. For example, $T_{i_1i_2} = T^{\prime}_{i_1}T^{\prime\prime}_{i_2}$ is a tensor product, and $T_{i_1} = \sum_{j_1} T^{\prime}_{i_1j_1}T^{\prime\prime}_{j_1}$ is a contraction. The contractions and tensor products are both $O(n)$-equivariant. Obviously, such operations can be mixed together, and we agree that $i_k$ will be placed at the $k$th index of the output tensor by default. We also agree that $j_1$ and $j_2$ vanish due to the contraction. \changed{In other words, $j_1$ and $j_2$ serve as dummy indices and do not appear in the final expression. As shown in the contraction example above, $j_1$ is summed over and thus disappears from the left-hand side of the equation.} Meanwhile, we will not use Einstein summation convention in this work to keep the conventions of machine learning community, and to avoid ambiguity during derivations. One should note that contractions can be expressed in term of matrix representation. Consider $T_{\changed{i_{1}}}=\sum_{j_1,j_2} T^{\prime}_{i_1j_1j_2}T^{\prime\prime}_{j_1j_2}$, we have

\begin{align*}
    &\begin{bmatrix} T_1 \\ T_2 \end{bmatrix} 
= \begin{bmatrix}
T^{\prime}_{111} & T^{\prime}_{112} & T^{\prime}_{121} & T^{\prime}_{122} \\
T^{\prime}_{211} & T^{\prime}_{212} & T^{\prime}_{221} & T^{\prime}_{222}
\end{bmatrix} \begin{bmatrix} T^{\prime\prime}_{11} \\ T^{\prime\prime}_{12} \\ T^{\prime\prime}_{21} \\ T^{\prime\prime}_{22} \end{bmatrix}.
\end{align*}
It is useful when we need to obtain the inverse of some linear operations. Thus, we sometimes denote $v \in \mathbb{R}^{3^n}$ (instead of $\left(\mathbb{R}^{3}\right)^{\otimes n}$) as a vector, if the only possible operation is tensor contraction with $M \in \mathbb{R}^{3^n} \times \mathbb{R}^{3^n}$, where $M$ can also be viewed as a matrix. However, by its nature, $v$ is an $(r = 0, s = n)$-tensor, and $M$ is an $(r = n, s = n)$-tensor. \changed{For a multi-index tensor $T_{i_1i_2i_3}$, sometimes we need to flatten some of the indices. During this operation, we consistently apply a ``right-to-left'' flattening style, similar to the vectorization $\mathrm{vec}$ in Section~\ref{sec:introduction}. For example, consider $T\in \mathbbm{R}^{3\times 3\times 3}$. If we flatten its first two indices, we obtain a matrix,
\begin{equation*}
\begin{pmatrix}
T_{x x x} & T_{x x y} & T_{x x z} \\
T_{x y x} & T_{x y y} & T_{x y z} \\
T_{x z x} & T_{x z y} & T_{x z z} \\
T_{y x x} & T_{y x y} & T_{y x z} \\
T_{y y x} & T_{y y y} & T_{y y z} \\
T_{y z x} & T_{y z y} & T_{y z z} \\
T_{z x x} & T_{z x y} & T_{z x z} \\
T_{z y x} & T_{z y y} & T_{z y z} \\
T_{z z x} & T_{z z y} & T_{z z z} \\
\end{pmatrix}.
\end{equation*}
}

\subsection{CG Matrices and the Parentage Scheme}

\begin{figure*}[tb]

  \centering
  \includegraphics[width=0.75\linewidth]{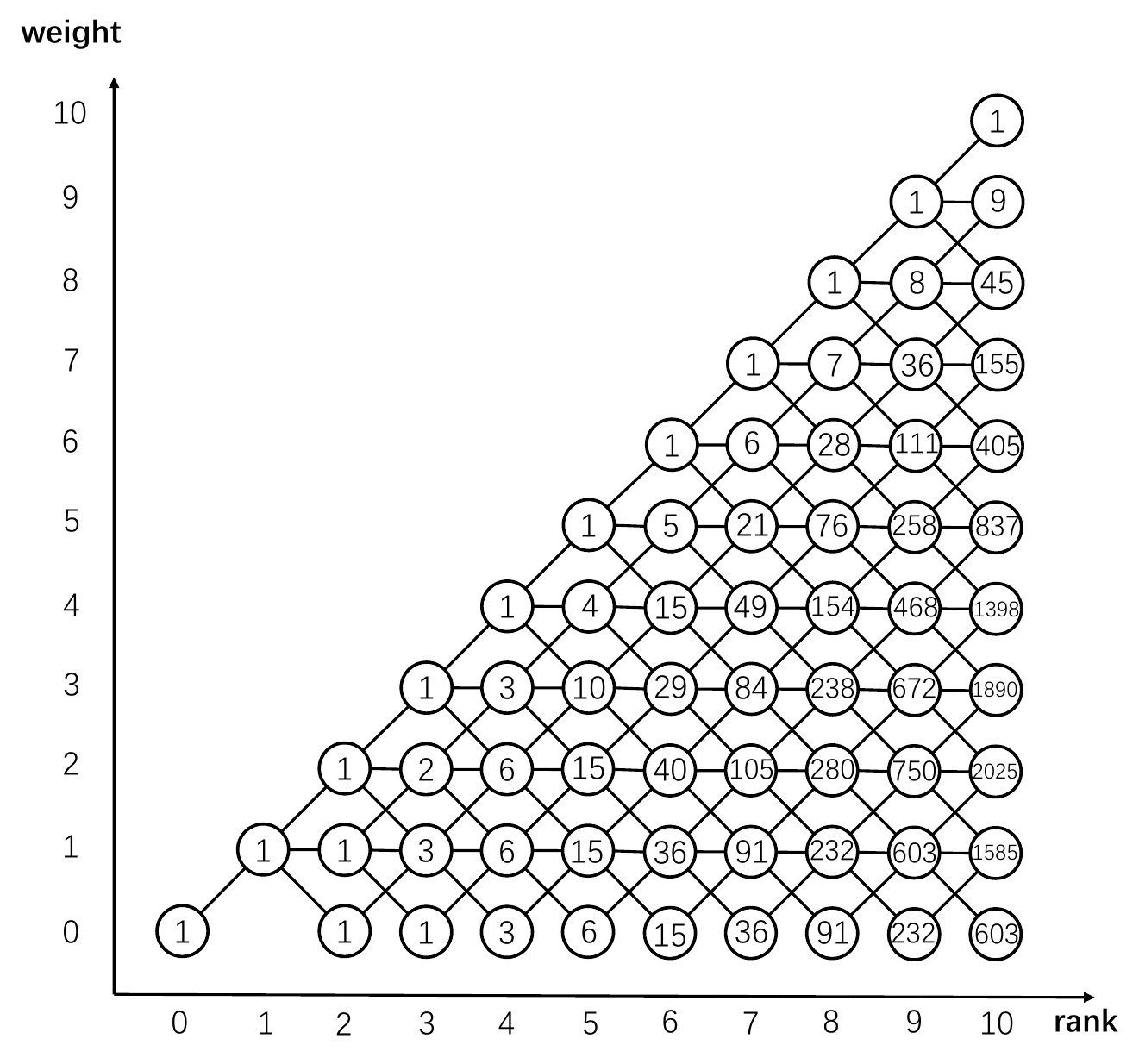}
  \vspace{-0.4cm}
  \caption{The parentage scheme. \changed{For ICTs $\hat{T}^{(n;l;q)}$, the x-axis corresponds to the rank $n$, the y-axis corresponds to the weight $l$, and the numbers in the circles represent the multiplicity $N^{(n;l)}$ of index $q$ in equation (\ref{eq:multiplicity}).} The line connection indicates that the irreducible representation with the weight in the left column can be decomposed into the irreducible representation with the weight in the right column, when coupling with an $l=1$ irreducible representation.
  }
  \label{fig:scheme}
\vspace{-0.4cm}
\end{figure*}

The $O(3)$ group has its representations in tensor product space $(\mathbbm{R}^3)^{\otimes n}$, which are some $3^n\times 3^n$ matrices, or equivalently tensor product of $n$ $(r=1,s=1)$-tensors, which depends on one's view. If we interpret them as matrices, then the objects they act on are simply $3^n$-dimensional vectors. The tensor product spaces $(\mathbbm{R}^3)^{\otimes n}$ for $n>1$ are generally reducible. Recall CG coefficients, two irreducible spherical vectors $v$ and $v^{\prime}$ coupling can also be viewed as that they first form a rank 2 tensor, and then decompose into different new spherical vectors. These operations can also be viewed as a product of matrices and vectors. In this context, the CG coefficients $\hat{C}^{l_1l_2l_o}$ form a $(2l_1+1)(2l_2+1)\times(2l_o+1)$ matrix, which we call a CG matrix. If we reshape it to a $(2l_1+1)\times(2l_2+1)\times(2l_o+1)$ tensor, then we instead call it an $(l_1,l_2,l_o)$-tensor $C^{l_1l_2l_o}$. Here are some important properties.

\lemma[Orthogonality]{\label{lemma:ortho} Given an $(l_1,l_2,l_o)$-CG tensor $C^{l_1l_2l_o}$, we have
\begin{align*}
    \sum_{j_1,j_2}C^{l_1l_2l_o}_{j_1j_2k}C^{l_1l_2l_o}_{j_1j_2k^\prime} = \sum_{j_1,j_2}C^{l_1l_2l_o}_{j_1kj_2}C^{l_1l_2l_o}_{j_1k^\prime j_2} = \sum_{j_1,j_2}C^{l_1l_2l_o}_{kj_1j_2}C^{l_1l_2l_o}_{k^\prime j_1 j_2} = 0
\end{align*}
for $k\neq k^\prime$. More generally, different CG tensors with the same $l_1$ and $l_2$, $l_1$ and $l_o$, or $l_2$ and $l_o$, but differing in other weights, are also orthogonal in this way. Additionally, in such cases, this orthogonality holds true even when $k=k^\prime$.

}

\vspace{0.2cm}

\lemma[Norm]{\label{lemma:norm} Given an $(l_1,l_2,l_o)$-CG tensor $C^{l_1l_2l_o}$, we have
\begin{align*}
    \sum_{j_1,j_2}(C^{l_1l_2l_o}_{j_1j_2k})^2 = \sum_{j_1,j_2}(C^{l_1l_2l_o}_{j_1j_2k^\prime})^2; \quad \sum_{j_1,j_2}(C^{l_1l_2l_o}_{j_1kj_2})^2 = \sum_{j_1,j_2}(C^{l_1l_2l_o}_{j_1k^\prime j_2})^2; \quad \sum_{j_1,j_2}(C^{l_1l_2l_o}_{kj_1j_2})^2 = \sum_{j_1,j_2}(C^{l_1l_2l_o}_{k^\prime j_1j_2})^2
\end{align*}}

\noindent for any pair of $k$ and $k^\prime$.

\vspace{0.2cm}

\lemma[Equivariance]{\label{lemma:equi} An $(l_1, l_2,l_o)$-CG tensor $C^{l_1l_2l_o}$ maintains the following equivariances,
\begin{equation*}
    \rho_{l_1\otimes l_2}(g) \odot C^{l_1l_2l_o} = C^{l_1l_2l_o}\odot\rho_{l_o}(g),
\end{equation*}
where $\odot$ denotes the tensor contraction. This equation holds for other permutations as Lemmas \ref{lemma:ortho} and \ref{lemma:norm}.
}

\vspace{0.2cm}

The above lemmas cover the CG matrices case. Since the columns of CG matrices have the same $L_2$ norm, we denote as $\norm{\cdot}_{col}$ for shorthand, and we sometimes call the last index of the CG tensors as the index of their columns. Recall that CG matrices are non-zero if and only if $|l_1-l_2|\leq l_o\leq l_1+l_2$. If we start from an $l=1$ vector and couple an additional $l=1$ once at a time, then this process has been drawn as a parentage scheme by \cite{coope1965irreducible}, as shown in Figure \ref{fig:scheme}. Each time we move from left to the adjacent right column we couple an additional $l=1$ vector. The parentage scheme also tells us which irreducible representations can a rank $n$ tensor be decomposed into. Note that a rank $n$ tensor can be decomposed into multiple spherical vectors, and the multiplicity for each weight given the rank is written in the scheme. Equivalently, it can also be decomposed into the same numbers of ICTs, but in the Cartesian spaces. We first review the definition of the ICT:

\definition[ICT space]{\label{def:ICT}A rank $n$, weight $l$, and index $q$ ICT space is a subspace, denoted as $\mathcal{V}^{(n;l;q)}\subseteq \left(\mathbbm{R}^3\right)^{\otimes n}$, such that for any $T\in \mathcal{V}^{(n;l;q)}$ and $g\in O(3)$, it holds true that
\begin{equation*}
    \rho_{\left(\mathbbm{R}^3\right)^{\otimes n}}(g)\odot T\in \mathcal{V}^{(n;l;q)},
\end{equation*}
and we also have
\begin{equation*}
    {\rm dim}\ \mathcal{V}^{(n;l;q)}=2l+1.
\end{equation*}
Meanwhile, there is no non-trivial $O(3)$-invariant subspace of $\mathcal{V}^{(n;l;q)}$.
}
\vspace{0.2cm}

Note that there can be multiple rank $n$, weight $l$ ICT spaces with different indices $q$. The tensor product space $\left(\mathbbm{R}^3\right)^{\otimes n}$ can be decomposed into multiple ICT spaces, and different ICT spaces only intersect at $\{0\}$. Given a tensor from the tensor product space, the sum of all the ICT decomposition matrices should equal $I$. \changed{We also have a closed-form formula for the multiplicity, which is obtained by predecessors \citep{momentaMihailov_1977,AndrewsSymmetry},}

\proposition{\label{prop:multip} The multiplicity of the \changed{weight $l$} ICTs of a rank $n$ tensor is 
\begin{align}
\label{eq:multiplicity}
\changed{N^{(n;l)}= \sum\limits^{\lfloor (n-\changed{l})/3 \rfloor}_{i\changed{=0}} (-1)^i\frac{n!(2n-3i-\changed{l}-2)!}{i!(n-i)!(n-2)!(n-3i-\changed{l})!}.}
\end{align}
}

\vspace{0.2cm}

Note that the number scales exponentially with respect to $n$, but it is smaller than $3^n$, the dimension of the tensor product space.

\section{Problem Formulation}
\label{sec:problem}

We first formulate the task of obtaining ICT decomposition matrices. The properties derived during the process of obtaining these decomposition matrices are further leveraged to construct the bases for equivariant linear mappings between tensor product spaces.

\subsection{ICT Decomposition}
\label{sec:ictdecomp}
In this paper, one of our goals is to find an efficient way to obtain a set of matrices $\{H^{(n;l;q)}\}$ to decompose a rank $n$ tensor into ICTs with different weight $l$ and index $q$, as shown in equation (\ref{eq:tht}). From the previous discussion, we know that ICT decomposition matrices remain invariant regardless of the tensor being decomposed, as the decomposition matrices are constructed with respect to the tensor product spaces. Each decomposition matrix should be able to map a tensor from the tensor product space to its corresponding ICT subspace with rank $n$, weight $l$, and index $q$. Given the definition of ICT space (Definition \ref{def:ICT}), we precisely define what an ICT decomposition is. In other words, we specify the properties that the matrices $H$ must satisfy:

\begin{definition}[ICT decomposition]
\label{def:ICTdecomp}
    Given a Cartesian tensor product space $(\mathbbm{R}^{3})^{\otimes n}\cong\mathbbm{R}^{3^n}$ formed by tensor product of Cartesian spaces $\mathbbm{R}^3$, an ICT decomposition is a set of decomposition matrices $H^{(n;l;q)}\in \mathbbm{R}^{3^n\times 3^n}$, where $l$ is the weight of ICTs, and $q$ is the index of the multiplicities, such that:

\vspace{0.2cm}
     1) $H^{(n;l;q)}$ are $O(3)$-$((\mathbbm{R}^{3})^{\otimes n},(\mathbbm{R}^{3})^{\otimes n})$-equivariant.

\vspace{0.2cm}

     2) The summation of the decomposition matrices is an identity matrix, i.e., $\sum_{l,q} H^{(n;l;q)}=I$.

\vspace{0.2cm}

     3) The rank (matrix rank) of the decomposition matrix $H^{(n;l;q)}$ is $2l+1$.

\vspace{0.2cm}

     4) $\{H^{(n;l;q)}v\mid v\in \mathbbm{R}^{3^n}\}$ has no non-trivial $O(3)$-invariant subspace.

\vspace{0.2cm}

     5) $\{H^{(n;l;q)}v\mid v\in\mathbbm{R}^{3^n}\}\bigcap \{H^{(n;l^\prime;q^\prime)}v\mid v\in\mathbbm{R}^{3^n}\}=\{0\}$ holds for $l\neq l^\prime$ or $q\neq q^\prime$.

\vspace{0.2cm}

\noindent Furthermore, if we have 

\begin{align}
    \left(H^{(n;l;q)}v\right)\odot\left(H^{(n;l^\prime;q^\prime)}v^\prime\right) =0
\end{align}
for any $v,v^\prime\in \mathbbm{R}^{3^n}$, and $l\neq l^\prime$ or $q\neq q^\prime$, then we further call it an orthogonal ICT decomposition, and the matrices orthogonal decomposition matrices.
\end{definition}

\vspace{0.2cm}

Consider condition 1), for any $g\in O(3)$ and $v\in R^{3^n}$, we must have

\begin{equation*}
    \rho_{\mathbbm{R}^{3^n}}(g)H^{(n;l;q)}v\in \{H^{(n;l;q)}v \mid v\in R^{3^n}\},
\end{equation*}
as
\begin{equation*}
    \rho_{\mathbbm{R}^{3^n}}(g)H^{(n;l;q)}v=H^{(n;l;q)}\rho_{\mathbbm{R}^{3^n}}(g)v,
\end{equation*}
where $\rho_{\mathbbm{R}^{3^n}}(g)v\in R^{3^n}$. Thus, condition 1) makes $\{H^{(n;l;q)}v \mid v\in R^{3^n}\}$ an $O(3)$-invariant subspace. Meanwhile, condition 3) can also guarantee that 

\begin{equation*}
{\rm dim}\ \{H^{(n;l;q)}v \mid v\in R^{3^n}\}=2l+1,
\end{equation*}
and therefore $\{H^{(n;l;q)}v \mid v\in R^{3^n}\}$ is a rank $n$, weight $l$, and index $q$ ICT space. Note that ICT decomposition is not necessary to be orthogonal, but being orthogonal is the ultimate goal for ICT decomposition. \changed{From now on, the proofs of supportive lemmas are shown in the Appendix \ref{sec:proofs} to save space and maintain a clear flow.} To prove the orthogonality, we have

\lemma{\label{lemma:ortholemma} Given matrices $M:\mathcal{V}\to \mathcal{V}^\prime$ and $M^\prime:\mathcal{V}\to \mathcal{V}^\prime$ of the same shape, if $M^\top M^\prime=0$, then 
\begin{equation}
    (M v)\odot(M^\prime v^\prime) = 0
\end{equation}
for any $v,v^\prime\in \mathcal{V}$.
}

\vspace{0.2cm}





This work obtains orthogonal ICT decomposition matrices for tensor product spaces with rank $6\leq n\leq 9$. A comparison with predecessors is shown in Table \ref{tab:rwcompare}. Theoretically, our method can obtain ICT decomposition matrices with arbitrary ranks analytically. While the full set of $n=10$ ICT decomposition matrices can be computed in a reasonable amount of time, storing such a massive number of matrices poses significant challenges due to the required storage space. However, in practical implementation, it is possible to obtain a subset of ICT decomposition matrices with desired weights, rather than generating the entire set of matrices.

\subsection{Basis of the Equivariant Design Space}
\label{sec:problembasis}

Given a tensor $T \in (\mathbbm{R}^3)^{\otimes n}$, a critical challenge in the design of EGNNs is determining which linear operations acting on the tensor are equivariant, i.e., find the set of $f$ in equation (\ref{eq:equivbasisdef}), denoted as ${\rm End}_{O(3)}((\mathbbm{R}^3)^{\otimes n})$. In real implementation, we flatten $T$ to vectors $v$, and the linear mappings $f$ are therefore matrices. \cite{pearce2023brauer} leverages \cite{brauer1937algebras}'s theorem to obtain spanning sets for homomorphism spaces. However, the spanning set becomes increasingly redundant as the rank of tensors rises. In this work, we aim to directly find orthogonal bases, instead of spanning sets, for equivariant spaces.

Additionally, we seek to find bases for the equivariant spaces ${\rm Hom}_{O(3)}(\mathcal{V}, \mathcal{V}^\prime)$, where $\mathcal{V}$ is not necessarily equal to $\mathcal{V}^\prime$. Here, $\mathcal{V}$ and $\mathcal{V}^\prime$ can be arbitrary direct sums of tensor product spherical spaces (which also include Cartesian spaces, recall Lemma \ref{lemma:firstdegree}). This approach enables us to design EGNNs more freely by making the domain and co-domain of equivariant layers steerable. In other words, given an arbitrary domain and its co-domain, one can efficiently construct equivariant layers without redundant parameters. A discussion of the relationship between \cite{pearce2023brauer} and this work is provided in Section~\ref{sec:pearce-crump}.

\section{High-Rank Orthogonal ICT Decomposition}
\label{sec:highorderict}

In this section, we aim to develop a method to efficiently obtain the ICT decomposition matrices, i.e., to address the challenge outlined in Section \ref{sec:ictdecomp}. In Section \ref{sec:ictdecomptheory}, we first introduce the theoretical foundation, along with proofs, which form the basis of our method. Then, in Section \ref{sec:ictdecompexample}, we provide an example to demonstrate how our method works.

\subsection{The Theory and Method}
\label{sec:ictdecomptheory}

According to the previous discussion, given a vector $v\in \mathbbm{R}^{3^n}$ (flattened from the tensor), we can decompose it into multiple irreducible vectors $v^{(1)},\dots,v^{(k)}$. Under the Cartesian basis, the vector $v$ equals to the \textbf{sum} of the irreducible vectors, as

\begin{equation*}
    v = \sum_{i} v^{(i)}.
\end{equation*}
However, the things are easier when we consider under the spherical basis. The above relation becomes
\begin{equation*}
    v = {\rm concat}(v^{(1)},\dots,v^{(k)}),
\end{equation*}
i.e., \textbf{concatenation} of vectors. It is easy to select out the desired irreducible vectors in this scenario. Thus, we first convert the Cartesian space to the direct sum of the spherical spaces, then, choose the desired irreducible representation, and finally convert it back to the Cartesian tensor product space. Therefore, the key here is to find a change-of-basis matrix between these two spaces. \changed{But how to construct it? Inspired by the fact that CG matrix is an equivariant mapping from the tensor product of two spaces to a single space, we use contraction to connect multiple CG matrices and tensor. This yields an equivariant mapping from tensor product of many spaces to a single space, which we call a path matrix. Then, a concatenation of path matrices becomes an equivariant mapping from tensor product of spaces to the direct sum of spaces. In the following we will introduce how we construct these path matrices in detail, and why they exhibit the properties we want.} We first obtain the following lemmas:

\lemma{
\label{lemma:contraction}
Given an $(l_x\otimes l_y,l_2)$-CG matrix $\hat{C}^{l_xl_yl_2}$ and an $(l_1, l_2,l_3)$-CG tensor $C^{l_1l_2l_3}$, if we let

\begin{equation}
\label{eq:contraction}
    T_{\changed{i_1 i_2 i_3}} = \sum\limits_{j_1} \hat{C}^{l_xl_yl_2}_{i_2j_1} C^{l_1l_2l_3}_{i_1j_1i_3}
\end{equation}
and we flatten \changed{the first two indices of} $T$, converting it into a $(2l_1+1)(2l_x+1)(2l_y+1)\times (2l_3+1)$ matrix. Then the following statements hold true:

\vspace{0.1cm}

1) $T$ is $O(3)$-$(l_1\otimes l_x \otimes l_y, l_3)$-equivariant.

\vspace{0.1cm}

2) The columns of $T$ are orthogonal. 

\vspace{0.1cm}

3) The columns of $T$ have the same $L_2$-norms.

}

\vspace{0.2cm}

\vspace{0.2cm}

\begin{figure*}[tb]

  \centering
  \includegraphics[width=0.85\linewidth]{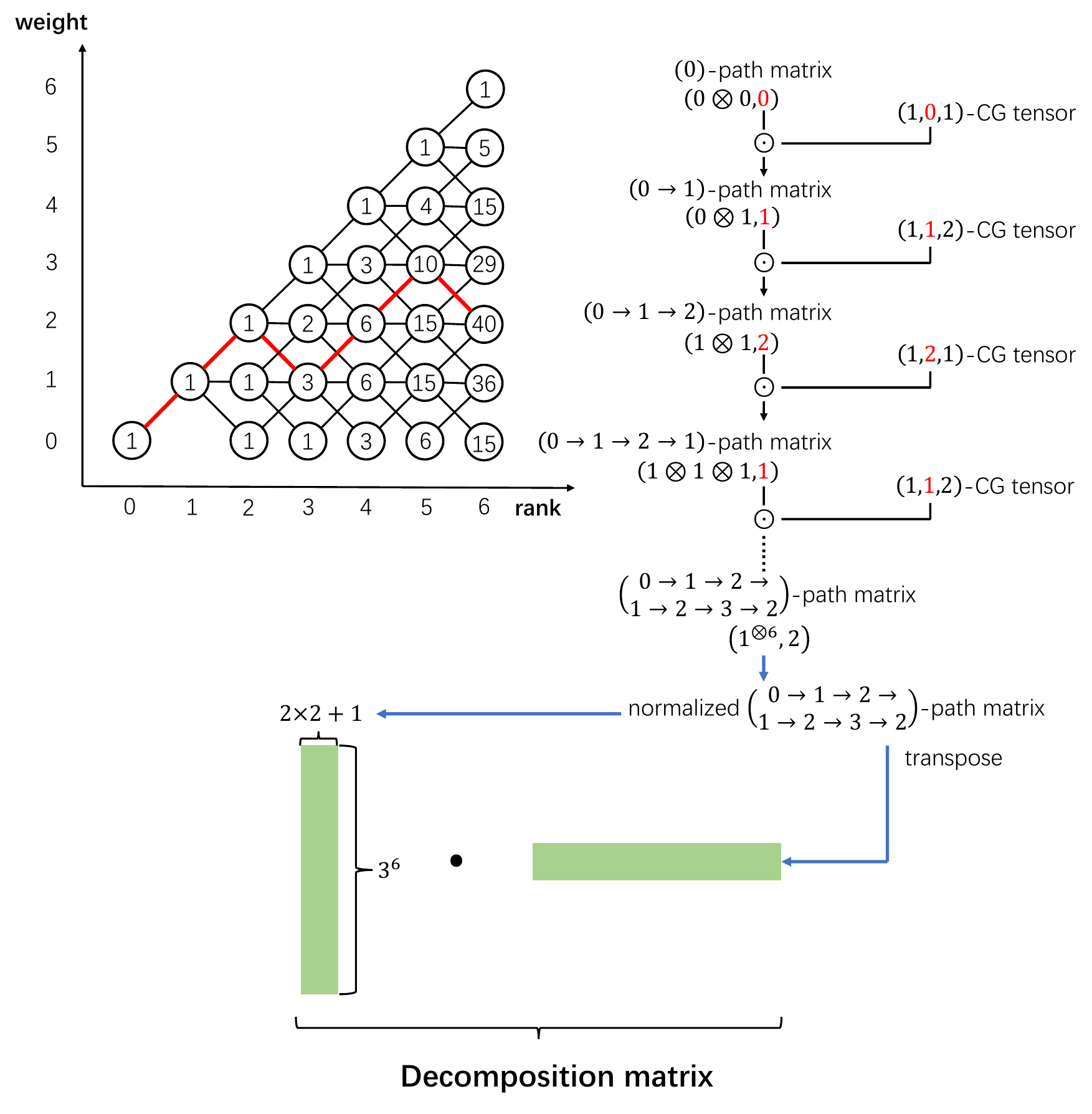}
  \vspace{-0.2cm}
  \caption{Construction of decomposition matrix. \textbf{Top:} The path matrices are formed via sequential contractions according to the paths in the parentage scheme. \textbf{Bottom:} The decomposition matrix is obtained as the product of the path matrix and its transpose.
  }
  \label{fig:ictdecomp}
\vspace{-0.5cm}
\end{figure*}

\noindent Here is the strategy for obtaining the path matrices, we first let a $(0\otimes 0,0)$-CG matrix to contract with a $(1,0,1)$-CG tensor to obtain a $(0\otimes 1,1)$-matrix. This matrix can further contract with $(1,1,2)$, $(1,1,1)$, and $(1,1,0)$-CG tensors to obtain the associated matrix. For example, we call the obtained $(0\otimes 1\otimes 1, 2)$ a $(0\to 1\to 2)$-path matrix $P^{(0\to 1\to 2)}$. The path matrices are constructed via the parentage scheme, which will decide what is the next CG tensor we are going to contract with, as shown in Figure \ref{fig:ictdecomp}. \changed{A rigorous description and the related properties are:}

\corollary{\label{coro:contract}\changed{Consider a path matrix $P^{(0\to l_1\to\cdots\to l_k)}$, with $l_1=1$, and $\abs{l_{i-1}-1}\leq l_i\leq l_{i-1}+1$, for $2\leq i\leq k$. This path matrix is iteratively constructed in the following way,
\begin{equation*}
P^{(0\to \dots\to l_u\to l_t)}_{i_1i_2i_3}= \sum_{j_1}P^{(0\to \dots\to l_u)}_{i_2j_1} C^{1, l_ul_t}_{i_1j_1i_3},
\end{equation*}
with the initial path matrix $P^{(0)}$ being a $(0\otimes 0,0)$-CG matrix, where $C^{1, l_2 l_3}$ is a CG tensor. After this contraction, we flatten the first two indices of $P^{(0\to \dots\to l_u\to l_t)}$. Then, the following properties hold,}

\vspace{0.1cm}

\changed{1) $P^{(0\to l_1\to\cdots\to l_k)}$ is $O(3)$-$((l=1)^{\otimes k}, l_k)$-equivariant.}

\vspace{0.1cm}

\changed{2) The columns of $P^{(0\to l_1\to\cdots\to l_k)}$ are orthogonal.}

\vspace{0.1cm}

\changed{3) The columns of $P^{(0\to l_1\to\cdots\to l_k)}$ have the same $L_2$-norms.}

}

\begin{proof}
    The path matrices have all the properties we used in Lemma \ref{lemma:contraction}. We prove this by a simple deduction. From the above lemma we know that $(0\to 1)$-path matrix holds all the properties. This gives us the initial condition. Assume that a matrix holds the properties, Lemma \ref{lemma:contraction} tells us the next obtained path matrix also shares them. 
\end{proof}

\vspace{0.2cm}

Next, we prove that

\lemma{\label{lemma:orthodifferent} The columns of different path matrices of the same path length are orthogonal. \changed{Specifically, given two path matrices $P^{(p)}$ and $P^{(p^{\prime})}$ with different paths $p$ and $p^{\prime}$, it holds that
\begin{equation*}
    \sum_{j}P^{(p)}_{ji_1} P^{(p^{\prime})}_{ji_2} = 0.
\end{equation*}}
}
\vspace{0.2cm}

\vspace{0.2cm}

Meanwhile, Schur's lemma is also required for the proof of the ICT decomposition.

\lemma[Schur's lemma]{\label{lemma:schur}
    Given finite-dimensional irreducible representations $\rho_{\mathcal{V}}:G\to \mathcal{V}$ and $\rho_{\mathcal{V}^\prime}:G\to \mathcal{V}^\prime$ of group $G$, and a matrix $M:\mathcal{V}\to\mathcal{V}^\prime$, suppose we have that

\begin{equation}
    \rho_{\mathcal{V}^\prime}(g)M = M\rho_{\mathcal{V}}(g)
\end{equation}
holds for any $g\in G$, then 

\vspace{0.2cm}

1) $M=0$ if $\mathcal{V}$ and $\mathcal{V}^\prime$ are not equivalent, i.e., there is no such matrix $Q$ that

\begin{equation}
    \rho_{\mathcal{V}}(g) = Q\rho_{\mathcal{V}^\prime}(g)Q^{-1}
\end{equation}
holds for any $g\in G$.

\vspace{0.2cm}

2) $M=cI$, where $c$ is a constant, if $\mathcal{V}=\mathcal{V}^\prime$.

}

\vspace{0.2cm}

Till now, it is adequate to show our solution to the problem:

\begin{algorithm}[t]
    \caption{Decomposition Matrices Generation}
    \label{algo:decomp}
    \begin{algorithmic}
        \State \changed{\textbf{Input:} Target rank $n$;}

        \State \changed{\textbf{Step 1.} Initialize $(0)$-path matrices $P^{(p_0)}=C^{000}$, and flatten the first two indices. Next, maintain the path $p_0=(0)$, the current column number of the parentage scheme $i=0$, and the current weight $l_0=0$.}

        \State \changed{\textbf{Step 2.} For those available weights in the $(i+1)$-th column, i.e., weights $\abs{l_i-1}\leq l_{i+1}\leq l_i+1$, we update the path by adding $l_{i+1}$ to the end of path $p_{i}$ to obtain $p_{i+1}$, and calculate the contraction,
        \begin{equation*}
            P^{(p_{i+1})} = \mathrm{flatten}_2(P^{(p_i)}\odot C^{1l_il_{i+1}}),
        \end{equation*}
        where $\mathrm{flatten}_2$ denotes flattening the first two indices. Then, we update the column number $i \gets i+1$.}

        \changed{\State \textbf{Step 3.} Repeat \textbf{Step 2} until $i = n$. For each obtained path matrices $P^{(p)}$, we normalize the columns by
        \begin{equation*}
            \hat{P}^{(p)}_{*,c}= \frac{P^{(p)}_{*,c}}{ \norm{P^{(p)}_{*,c}}_2}.
        \end{equation*}
        Then, we multiply each path matrix with its transpose,
        \begin{equation*}
            H^{(n;l;q)} = \hat{P}^{(p)}\cdot (\hat{P}^{(p)})^{\top}.
        \end{equation*}}
        
        \State \changed{\textbf{Output:} The list of all $H^{(n;l;q)}$.}
    \end{algorithmic}
\end{algorithm}

\theorem{\label{theorem:ict} \changed{As in Algorithm \ref{algo:decomp},} given the set of all the paths $\{p_i\}$ of the same length $n+1$ with $\#\{p_i\}=K$ and path matrices $\{P^{(p_i)}\}$, the set of $\left\{\left(\norm{P^{(p_i)}}_{col}\right)^{-2}P^{(p_i)}\left(P^{(p_i)}\right)^\top\right\}$ is a rank $n$ orthogonal ICT decomposition. }

\vspace{0.2cm}

\begin{proof}
     From Corollary \ref{coro:contract}, we know that each $P^{(p_i)}$ has full rank (but not a square matrix), and has the same $L_2$ norm of each column, $\norm{P^{(p_i)}}_{col}$. Subsequently, normalized path matrices $\hat{P}^{(p_i)}=\left(\norm{P^{(p_i)}}_{col}\right)^{-1} P^{(p_i)}$ are with a unit $L_2$ norm, as $\norm{\left(\norm{P^{(p_i)}}_{col}\right)^{-1} P^{(p_i)}}_{col}=1$. This does not affect the equivariance of the path matrices, as the $L_2$-norm is invariant. We concatenate all the normalized path matrices $\hat{P}^{(p_i)}$ along the last dimension, 
\begin{align}
    D^{(l=1)^{\otimes n}\to {\rm irr}((l=1)^{\otimes n})}=
    \begin{bmatrix}
        \left(\norm{P^{(p_1)}}_{col}\right)^{-1} P^{(p_1)} &\dots& \left(\norm{P^{(p_K)}}_{col}\right)^{-1} P^{(p_K)}
    \end{bmatrix}
\end{align}
where irr outputs all the possible spherical irreducible representation spaces. The above equation forms a $3^{n}\times 3^{n}$ matrix. In this proof, we will use $D$ to represent $D^{(l=1)^{\otimes n}\to {\rm irr}((l=1)^{\otimes n})}$ to save space. It is an orthogonal matrix: $D^\top D=DD^\top=I$, as its columns are all orthogonal and normalized to unit $L_2$ norm as we discussed above. From Corollary \ref{coro:contract}, we know that $D^\top$ is $O(3)$-$\left((l=1)^{\otimes n},{\oplus_{i}^ K}\left(l^{(i)}\right)\right)$-equivariant, where $l^{(i)}\in {\rm irr}((l=1)^{\otimes n})$. If we left multiply it with a selection matrix (a diagonal matrix with ones and zeros on the diagonal) $I^{(j)}$ to select out the $j$th irreducible spherical representation, also defined as the last weight of the path $p_j$, the resulting $I^{(j)}D^\top$ is $O(3)$-$\left((l=1)^{\otimes n},l^{(j)}\right)$-equivariant. Similarly, the inverse $D$ is $O(3)$-$\left({\oplus_{i}^ K}\left(l^{(i)}\right),(l=1)^{\otimes n}\right)$-equivariant. With the right multiplied $I^{(j)}$, it becomes $O(3)$-$\left(l^{(j)},(l=1)^{\otimes n}\right)$-equivariant. Together, $D I^{(j)}D^\top$ is $O(3)$-$\left((l=1)^{\otimes n},(l=1)^{\otimes n}\right)$-equivariant. From Lemma \ref{lemma:firstdegree}, this relation is equivalent to $O(3)$-$\left((\mathbbm{R}^3)^{\otimes n},(\mathbbm{R}^3)^{\otimes n}\right)$-equivariance in the $n$-rank Cartesian tensor product space. Meanwhile, 
\begin{equation}
D I^{(j)}D^\top=\hat{P}^{(p_j)}(\hat{P}^{(p_j)})^\top, 
\end{equation}
which follows from the fact that $D$ is an orthogonal matrix, and the concatenation of $\hat{P}$ forms $D$. It therefore meets condition 1) of Definition \ref{def:ICTdecomp}. 

Next we prove 2), it holds true because 

\begin{align}
    \sum_{j} \hat{P}^{(p_j)}\left(\hat{P}^{(p_j)}\right)^\top = \sum_{j} D I^{(j)}D^\top= D\left(\sum_{j} I^{(j)}\right)D^\top  =I.
\end{align}

Then, for 3), we prove that $\hat{P}^{(p_j)}\left(\hat{P}^{(p_j)}\right)^\top$ has matrix rank $2l^\prime+1$, where $l^\prime$ is the last weight of the path $p_j$. It is easy to prove, as

\begin{equation}
    {\rm Rank}\left(\hat{P}^{(p_j)}\left(\hat{P}^{(p_j)}\right)^\top\right) = {\rm Rank}\left(\hat{P}^{(p_j)}\right) = {\rm Rank}\left(\left(\hat{P}^{(p_j)}\right)^\top\right) = 2l^\prime+1.
\end{equation}

To prove 4), we first note that the tensor product space is a direct sum of multiple irreducible representation spaces, as

\begin{equation}
    \left(\mathbbm{R}^3\right)^{\otimes n} = \oplus^K_i \hat{\mathcal{V}}^{\left(\left(\mathbbm{R}^3\right)^{\otimes n};i\right)},
\end{equation}
where $\hat{\mathcal{V}}^{(\mathcal{V};i)}$ denotes the $i$th irreducible representation, residing in the $\mathcal{V}$ space. Meanwhile, note that $\left(\hat{P}^{(p_j)}\right)^\top$ maps from $\left(\mathbbm{R}^3\right)^{\otimes n}$ to $ l^{(j)}$. According to Lemma \ref{lemma:schur} (Schur's lemma), $\left(\hat{P}^{(p_j)}\right)^\top$ is also a non-zero mapping from $\hat{\mathcal{V}}^{\left(\left(\mathbbm{R}^3\right)^{\otimes n};i\right)}$ to $l^{(j)}$, for some $i$. Conversely, the same reasoning tells us that $\hat{P}^{(p_j)}$ is a mapping from $l^{(j)}$ to $\hat{\mathcal{V}}^{\left(\left(\mathbbm{R}^3\right)^{\otimes n};i\right)}$. Put them together, $\hat{P}^{(p_j)}\left(\hat{P}^{(p_j)}\right)^\top$ is an equivariant mapping from $\hat{\mathcal{V}}^{\left(\left(\mathbbm{R}^3\right)^{\otimes n};i\right)}$ to itself. Since $\hat{\mathcal{V}}^{\left(\left(\mathbbm{R}^3\right)^{\otimes n};i\right)}$ is an irreducible representation space, there is no non-trivial $O(3)$-invariant subspace of the output space, as the input space does. This concludes the proof for 4).

Finally, we prove the orthogonality, given $\hat{P}^{(p_k)} (\hat{P}^{(p_k)})^\top$ and $\hat{P}^{(p_{k^\prime})} (\hat{P}^{(p_{k^\prime})})^\top$ with different $k\neq k^\prime$, consider dot product of an arbitrary pair of their columns $c\neq c^\prime$, we have
\begin{align}
&\left(\hat{P}^{(p_k)}\left(\hat{P}^{(p_k)}\right)^\top\right)_{*,c}\odot \left(\hat{P}^{(p_{k^\prime})}\left(\hat{P}^{(p_{k^\prime})}\right)^\top\right)_{*,c^\prime} \nonumber\\
     =&\sum_{j_1}\left(\sum_{j_2}\hat{P}^{(p_k)}_{j_1 j_2} \hat{P}^{(p_k)}_{c j_2}\right)\left(\sum_{j^\prime_2}\hat{P}^{(p_{k^\prime})}_{j_1 j^\prime_2} \hat{P}^{(p_{k^\prime})}_{c^\prime j^\prime_2}\right)\nonumber\\
     =&\sum_{j_2,j^\prime_2}\hat{P}^{(p_k)}_{c j_2}\hat{P}^{(p_{k^\prime})}_{c^\prime j^\prime_2}\underbrace{\left(\sum_{j_1}\hat{P}^{(p_k)}_{j_1 j_2} \hat{P}^{(p_{k^\prime})}_{j_1 j^\prime_2}\right)}_{0} \nonumber\\
     = &0.
\end{align}
The last equality follows from Lemma \ref{lemma:orthodifferent}. From Lemma \ref{lemma:ortholemma}, it satisfies the orthogonality requirement of Definition \ref{def:ICTdecomp}, which is stronger than condition 5), so 5) is implicitly guaranteed. 
\end{proof}

The length $n$ of the path in the above theorem includes $0$ as the first waypoint of the path. \changed{Here, we discuss why our work obtains orthogonality, while the predecessors' works (i.e., \cite{coope1965irreducible,andrews1982irreducible,bonvicini2024irreducible}) do not. Given a rank-$n$ tensor $T$ to decompose, they first convert it to the weight-$l$ natural tensor space by the mapping tensor $\tilde{G}$. Then, another mapping tensor $G$ pulls the tensor back to the original rank-$n$ tensor space. Since $\tilde{G}$ filters out irreducible representations other than the one of weight $l$ and some index $q$, the final space is of rank $n$, weight $l$, and index $q$. $\tilde{G}$ and $G$ are calculated from natural projection tensors and FICTs, which are, in fact, multiple $\delta$s and Levi-Civita $\epsilon$ (can be seen as rank-$2$ and rank-$3$ tensors, respectively). There are different permutation strategies of these $\delta$s and $\epsilon$'s. We only need to pick up those linearly independent components, which does not guarantee that they are orthogonal. We can make $G$ and $\tilde{G}$ into matrices, and the ICT decomposition matrices are then $G\tilde{G}$. Since the above orthogonality does not hold, different $G$ or $\tilde{G}$ are not orthogonal either. Now, look back to our work, the ICT decomposition matrices are $\hat{P}(\hat{P})^{\top}$, where $(\hat{P})^{\top}$ converts the tensor to the spherical irreducible space, and $\hat{P}$ pulls back the tensor to the original space. The orthogonality of different ICT decomposition matrices comes from the column orthogonality of $\hat{P}$, which further comes from the orthogonality of CG tensor. This is the key reason why our work guarantees orthogonality while the predecessors works do not. Next, we show the complexity of our algorithm for obtaining ICT decomposition matrices.}

\begin{theorem}
\changed{Obtaining the ICT decomposition matrices is of exponential time and space complexity, in terms of rank $n$.}
\end{theorem}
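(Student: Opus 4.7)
The plan is to derive both the time and space bounds directly from Algorithm \ref{algo:decomp}, relying only on (i) the bounded branching of the parentage scheme and (ii) the explicit dimensions of the path matrices produced at each iteration. The key observations are that a path matrix after $k$ contractions has shape $3^{k}\times (2l_k+1)$, that each contraction with a CG tensor $C^{1,l_k l_{k+1}}$ of shape $3\times(2l_k+1)\times(2l_{k+1}+1)$ is a dense linear operation, and that the number of successors of any node in the parentage scheme is at most $3$ because the CG selection rule forces $|l_k-1|\le l_{k+1}\le l_k+1$.

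First, I would bound the number $K$ of paths of length $n+1$. Since the root $l_0=0$ has a unique successor and every subsequent node branches into at most three successors, $K\le 3^{\,n-1}$. Alternatively, using Proposition \ref{prop:multip} together with the identity $\sum_l (2l+1)N^{(n;l)}=3^n$, one obtains $K=\sum_l N^{(n;l)}\le 3^n$. Either way the count is exponential in $n$.

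Second, I would analyze the cost of producing a single path matrix. The $k$-th contraction in Step 2 of Algorithm \ref{algo:decomp} requires $O\bigl(3^{k+1}\cdot(2l_k+1)\cdot(2l_{k+1}+1)\bigr)=O(3^{k+1}n^2)$ arithmetic operations, since $l_k,l_{k+1}\le n$, so the total cost along a single path telescopes to $O(3^{n}n^2)$. Forming the final decomposition matrix $H^{(n;l;q)}=\hat P^{(p)}\bigl(\hat P^{(p)}\bigr)^\top$ from the $3^n\times(2l+1)$ normalized path matrix then costs $O\bigl(3^{2n}\cdot(2l+1)\bigr)=O(3^{2n}n)$. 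Multiplying these per-path costs by $K\le 3^n$ and noting that each $H^{(n;l;q)}$ occupies $3^n\times 3^n$ memory immediately yields exponential upper bounds on both total time and total space.

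The main obstacle, and essentially the only subtlety, is being explicit about what is stored. If one retains all $H^{(n;l;q)}$ simultaneously as dense $3^n\times 3^n$ matrices, an $O(27^n)$ space estimate is unavoidable; however, leveraging the fact from Theorem \ref{theorem:ict} that the concatenation of the $\hat P^{(p)}$'s forms a single $3^n\times 3^n$ orthogonal matrix, one may store only this change-of-basis and reconstruct individual $H^{(n;l;q)}$ on demand, which brings storage down to $O(9^n)$. Either form is exponential in $n$, equivalently polynomial in the ambient dimension $3^n$, which is exactly the claim; no matching lower bound is required since the statement is an upper-bound assertion.
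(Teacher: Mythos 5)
Your proposal is correct and follows essentially the same route as the paper: bound the number of paths via the at-most-three-way branching of the parentage scheme, bound each contraction's cost by the explicit $3^{k}\times(2l_k+1)$ shape of the path matrices so the per-path cost telescopes geometrically, add the cost of the final products $\hat P^{(p)}\bigl(\hat P^{(p)}\bigr)^{\top}$, and note the exponential size of the stored matrices. Your accounting is in fact marginally tighter (you charge $K\le 3^{n}$ products of a path matrix with its own transpose, giving $O(n\,3^{3n})$, versus the paper's $3^{2n}$ pairs giving $O(n\,3^{4n})$), and your remark on storing only the orthogonal change-of-basis matches the paper's $O(9^{n})$ space count; both yield the claimed exponential bounds.
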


\begin{proof}
\changed{For time complexity, note that Parentage Scheme is faster than fully traversing a ternary tree, which consists of $3^n$ operations. Next, we consider the algebraic complexity. Suppose that we are in the $i$-th column of Parentage Scheme. For each weight $l_i$, suppose that we maintain a path $p_i$, and there are at most three valid contractions,  $P^{(p_i)}$ contracts with $C^{(1l_i(l_i-1))}$, $C^{(1l_i(l_i))}$, and $C^{(1l_i(l_i+1))}$, respectively. Since $P^{(p_i)}$ has shape $3^{i}\times (2l_i+1)$, the number of operations of the above contractions are $3^{i}\times(2l_i+1)\times 3\times  (2l_i-1) $, $3^{i}\times(2l_i+1)\times 3\times  (2l_i+1) $, and $3^{i}\times(2l_i+1)\times 3\times  (2l_i+3) $, respectively. Note that $2l_i +3\leq 2i+3$, thus the total of the above three contractions occupy no more than $(2i+3)^23^{i+2}$ operations. The contractions will happen for at most $i$ times, so this layer consists of at most $(2i+3)^33^{i+2}$ operations. There are $n$ layers, and for each layer, the number of operations is no more than $(2n+3)^33^{n+2}$. Thus, all the $n$ layers contribute to at most $(2n+3)^43^{n+2}$ operations. Lastly, we consider the multiplication between path matrices. There are at most $3^{2n}$ pairs of path matrices, each of which holds at most $(2n+1)3^{2n}$ operations. Thus, there are at most $(2n+1)3^{4n}$ operations. Together with the previous path matrices generation, there are in total at most $(2n+1)^4(3^{n+2}+3^{4n})$ operations, which holds $\mathcal{O}(n^43^{4n})$ complexity. 
For space complexity,
    the largest matrices we maintain are the set of $\hat{P}^{(p)}$ for possible all paths $p$, of which the total size is $3^{n-1}\times 3^{n-1}$.
    }
\end{proof}

It is also noteworthy that the complexity cannot be lower than exponential in terms of rank $n$, as the ICT decomposition matrices themselves have an inherently exponential structure.

\begin{table}[]
\centering
\setlength{\tabcolsep}{5pt}
\renewcommand{\arraystretch}{1.5}
\centering
\begin{tabular}{lccccc}
\hline
Rank & 5               & 6   & 7   & 8     & 9     \\ \hline
Time & \textless{}0.1s & 1s  & 3s  & 11s & 4m32s \\ \hline
\end{tabular}
\caption{Benchmarking decomposition matrices generation speed.}
\label{tab:speedbenchmark}
\end{table}

\subsection{An Example of Constructing Path Matrices}
\label{sec:ictdecompexample}

 Here, we describe the process of constructing decomposition matrices using the parentage scheme. A path in the scheme uniquely corresponds to a path matrix. We start by creating a temporary path matrix, denoted as $(0\otimes0,0)$, which will be updated in each round. This matrix is reshaped from a $(0,0,0)$ CG tensor (merging the first two dimensions). The starting point is $(0,0)$ in the parentage scheme, where the first coordinate represents the rank, and the second represents the weight.

Next, we move from rank $0$ to rank $1$, i.e., from the first column to the second column. Observe that the only line connecting $(0,0)$ and $(1,*)$ is the one between $(0,0)$ and $(1,1)$. Therefore, the CG tensor to be contracted is $(1,0,1)$. The $1$ at the third place is determined by the second $1$ in $(1,1)$ in the parentage scheme. The $0$ at the second place is determined by the fact that our currently maintained temporary path matrix has $0$ in the last position of $(0\otimes0,0)$. The first $1$ is fixed and remains unchanged regardless of which lines we are moving along (this behavior will change for the general parentage scheme, as introduced in Section \ref{sec:extension}).

The contraction is performed between the last dimension of the temporary path matrix and the second dimension of the CG tensor, as shown in equation (\ref{eq:contraction}). By merging the first two dimensions of the resulting tensor, we obtain a $(0\to1)$ path matrix, which can be represented as $(0\otimes0\otimes1,1)$ (we sometimes omit the $0\otimes0$ for notational simplicity).

At this point, we are positioned at $(1,1)$. The connected points in the next column are $(2,0)$, $(2,1)$, and $(2,2)$. Consequently, the CG tensors to be contracted are $(1,1,0)$, $(1,1,1)$, and $(1,1,2)$, respectively. These contractions yield initial path matrices $(0\to1\to2)$, $(0\to1\to1)$, and $(0\to1\to0)$ with shapes $(1\otimes1,2)$, $(1\otimes1,1)$, and $(1\otimes1,0)$, respectively. This process is repeated iteratively until we reach rank $n$. The process and an example are illustrated in Figure \ref{fig:ictdecomp}. \changed{A detailed example for generating rank-$2$ ICT decomposition matrices are shown in Appendix \ref{sec:examplerank2}.}

We present the process in Algorithm \ref{algo:decomp}. We also benchmark the speed of our algorithm for obtaining the set of all decomposition matrices for different ranks, as shown in Table \ref{tab:speedbenchmark}. We run the experiments on 28-cores Intel\textsuperscript{\textregistered} Xeon\textsuperscript{\textregistered} Gold 6330 CPU @ 2.00GHz.

\section{Basis of the Equivariant Design Space}
\label{sec:basis}

Section \ref{sec:problembasis} implies that one of our goals is to find the basis of the equivariant design space $\{ M\in \mathbbm{R}^{3^n}\times\mathbbm{R}^{3^n} \ |\ \rho(g)M=M\rho(g) \}$, where $M$ are some linear mappings. Recall Lemma \ref{lemma:schur} (Schur's lemma), for an endomorphism space from a single finite-dimensional irreducible representation space to itself, the basis is simply an identity matrix $I$ and it is a single dimension space. Note that a rank-$n$ Cartesian tensor is a direct sum of several such irreducible representation spaces. In such situation, we obtain the following lemma:

\lemma{ \label{lemma:basis}
    Given a representation $\mathcal{V}=\oplus^n_i \mathcal{V}^{(i)}$ that can be written as the direct sum of several finite-dimensional irreducible representations $\mathcal{V}^{(i)}$. If they are isomorphic to each other, i.e.,
    \begin{equation*}
    \mathcal{V}^{(0)}\cong \ldots \cong \mathcal{V}^{(n)}, 
    \end{equation*}
    then the number of dimensions of the equivariant design space is $n^2$, and the basis consists of one-dimensional linear projections with a scaling factor from $\mathcal{V}^{(i)}$ to $\mathcal{V}^{(j)}$ for any $i$ and $j$.}
    
\vspace{0.2cm}

\begin{proof}
    First note that 
    \begin{align}
    (\mathcal{V}^{(i)}\oplus \mathcal{V}^{(j)})\to \mathcal{V}^\prime\cong (\mathcal{V}^{(i)}\to \mathcal{V}^{\prime})+(\mathcal{V}^{(j)}\to \mathcal{V}^{\prime}),
    \end{align}
    where $+$ represents linear combinations in the same codomain, and conversely
\begin{align}
    \mathcal{V}^\prime\to(\mathcal{V}^{(i)}\oplus \mathcal{V}^{(j)}) \cong (S^{\prime}\to \mathcal{V}^{(i)})\oplus(\mathcal{V}^{\prime}\to \mathcal{V}^{(j)}).
\end{align}
Given a linear projection $f:\oplus^n_i \mathcal{V}^{(i)}\to \oplus^n_i \mathcal{V}^{(i)}$, we have
\begin{align}
    &\oplus^n_i \mathcal{V}^{(i)}\to \oplus^n_i \mathcal{V}^{(i)} \nonumber\\
    \cong &\oplus^n_j \left(\oplus_i^n \mathcal{V}^{(i)}\to \mathcal{V}^{(j)}\right) \nonumber\\
    \cong &\oplus^n_j \left(+_i^n \left(\mathcal{V}^{(i)}\to \mathcal{V}^{(j)}\right) \right).
\end{align}
Note that the mapping spaces $(\mathcal{V}^{(i_1)} \to \mathcal{V}^{(j)})$ and $(\mathcal{V}^{(i_2)} \to \mathcal{V}^{(j)})$ form a basis for the sum $(\mathcal{V}^{(i_1)} \to \mathcal{V}^{(j)}) + (\mathcal{V}^{(i_2)} \to \mathcal{V}^{(j)})$ when $i_1 \neq i_2$. According to the definition of a direct sum, the union of the basis elements of spaces $\mathcal{A}$ and $\mathcal{B}$ must form a basis of $\mathcal{A} \oplus \mathcal{B}$. By Lemma \ref{lemma:schur}, each such mapping space is one-dimensional. Therefore, the mapping from $\oplus_{i=1}^n \mathcal{V}^{(i)}$ to itself, denoted as $\oplus_{i=1}^n \mathcal{V}^{(i)} \to \oplus_{i=1}^n \mathcal{V}^{(i)}$, has a total of $n^2$ basis elements of the form $\mathcal{V}^{(i)} \to \mathcal{V}^{(j)}$.
\end{proof}
This lemma hints us that the keypoint of the basis construction is to find a set of matrices that map between different irreducible Cartesian tensors with the same weight. Similarly to the ICT decomposition, it is easy to find such mappings in spherical spaces rather than the Cartesian ones. Once we find such mappings, we can also use change-of-basis matrices to pull us back to the Cartesian space. Meanwhile, proper construction and the orthogonality of the path matrices could help us find an orthogonal basis.

Before formally proving the following construction is an orthogonal basis, we first introduce the idea of the construction. From the perspective of matrices, it is equivalent to say that the matrices which can \textit{scale an irreducible representation and add it to another} (including itself) form a basis for the equivariant design space. Here, we provide an example.

\begin{equation}
\label{eq:addingmatrix}
\underset{A^{(p_i\to p_j)}}{\begin{bmatrix}
  \ddots & \vdots & \vdots & \vdots & \vdots & \vdots & \vdots & \reflectbox{$\ddots$}  \\
  \cdots & 0 & 0 & 0 & k & 0 & 0 & \cdots \\
  \cdots & 0 & 0 & 0 & 0 & k & 0 & \cdots \\
  \cdots & 0 & 0 & 0 & 0 & 0 & k & \cdots \\
  \cdots & 0 & 0 & 0 & 0 & 0 & 0 & \cdots \\
  \cdots & 0 & 0 & 0 & 0 & 0 & 0 & \cdots \\
  \cdots & 0 & 0 & 0 & 0 & 0 & 0 & \cdots \\
  \reflectbox{$\ddots$} & \vdots & \vdots & \vdots & \vdots & \vdots & \vdots & \ddots 
\end{bmatrix}}
\begin{bmatrix}
  \vdots \\
  v^{(p_j)}_{-1} \\
  v^{(p_j)}_{0} \\
  v^{(p_j)}_{1} \\
  v^{(p_i)}_{-1} \\
  v^{(p_i)}_{0} \\
  v^{(p_i)}_{1} \\
  \vdots
\end{bmatrix} = 
\begin{bmatrix}
  \vdots \\
  kv^{(p_i)}_{-1} \\
  kv^{(p_i)}_{0} \\
  kv^{(p_i)}_{1} \\
  0 \\
  0 \\
  0 \\
  \vdots
\end{bmatrix},
\end{equation}
where $A^{(p_i\to p_j)}$ adds $kv^{(p_i)}$ to the position of $v^{(p_j)}$, where $v^{(p_i)}$ belongs to the spherical space mapped by the $p_i$ path matrix, and $k$ is a scalar. Note that $A^{(p_i\to p_j)}$ resides in the spherical space and is an ($r=1$,$s=1$)-tensor. To construct a matrix that performs the adding operation in the irreducible Cartesian space, we need to convert it back to Cartesian space via the change-of-basis matrices $D$ and $D^{-1}=D^\top$. Since it is an ($r=1$,$s=1$)-tensor, we need two change-of-basis matrices to multiply with it to convert both the vector component and the covector component, where $D$ This gives us
\begin{align}
\label{eq:addingmatrix2}
&\underset{D}{\begin{bmatrix}
  \vdots \\
  (\hat{P}^{(p_j)}_{*,0})^\top \\
  (\hat{P}^{(p_j)}_{*,1})^\top \\
  (\hat{P}^{(p_j)}_{*,2})^\top \\
  (\hat{P}^{(p_i)}_{*,0})^\top \\
  (\hat{P}^{(p_i)}_{*,1})^\top \\
  (\hat{P}^{(p_i)}_{*,2})^\top \\
  \vdots
\end{bmatrix}}^\top
\underset{A^{(p_i\to p_j)}}{\begin{bmatrix}
  \ddots & \vdots & \vdots & \vdots & \vdots & \vdots & \vdots & \reflectbox{$\ddots$}  \\
  \cdots & 0 & 0 & 0 & k & 0 & 0 & \cdots \\
  \cdots & 0 & 0 & 0 & 0 & k & 0 & \cdots \\
  \cdots & 0 & 0 & 0 & 0 & 0 & k & \cdots \\
  \cdots & 0 & 0 & 0 & 0 & 0 & 0 & \cdots \\
  \cdots & 0 & 0 & 0 & 0 & 0 & 0 & \cdots \\
  \cdots & 0 & 0 & 0 & 0 & 0 & 0 & \cdots \\
  \reflectbox{$\ddots$} & \vdots & \vdots & \vdots & \vdots & \vdots & \vdots & \ddots 
\end{bmatrix}}
\underset{D^\top}{\begin{bmatrix}
  \vdots \\
  (\hat{P}^{(p_j)}_{*,0})^\top \\
  (\hat{P}^{(p_j)}_{*,1})^\top \\
  (\hat{P}^{(p_j)}_{*,2})^\top \\
  (\hat{P}^{(p_i)}_{*,0})^\top \\
  (\hat{P}^{(p_i)}_{*,1})^\top \\
  (\hat{P}^{(p_i)}_{*,2})^\top \\
  \vdots
\end{bmatrix}} \nonumber\\
=&\begin{bmatrix}
  \vdots \\
  (\hat{P}^{(p_j)}_{*,0})^\top \\
  (\hat{P}^{(p_j)}_{*,1})^\top \\
  (\hat{P}^{(p_j)}_{*,2})^\top \\
  (\hat{P}^{(p_i)}_{*,0})^\top \\
  (\hat{P}^{(p_i)}_{*,1})^\top \\
  (\hat{P}^{(p_i)}_{*,2})^\top \\
  \vdots
\end{bmatrix}^\top
\begin{bmatrix}
  \vdots \\
  k(\hat{P}^{(p_i)}_{*,0})^\top \\
  k(\hat{P}^{(p_i)}_{*,1})^\top \\
  k(\hat{P}^{(p_i)}_{*,2})^\top \\
  0 \\
  0 \\
  0 \\
  \vdots
\end{bmatrix} \nonumber\\
= &k\sum\limits_{j}\hat{P}^{(p_j)}_{i_1 j} \left(\hat{P}^{(p_i)}\right)^\top_{i_2 j}.
\end{align}
Thus, we can reduce the multiplication of three square matrices to a series of products of sub-matrices. To formally prove that the above construction works, we also need the help from the Sylvester’s inequality \citep{matsaglia1974equalities}.

\lemma[Sylvester’s inequality]{
\label{lemma:sylvester}
Given $M\in \mathbbm{C}^{m\times n}$ and $M^\prime\in \mathbbm{C}^{n\times p}$, we have

\begin{equation*}
    {\rm Rank}(MM^\prime) +n \geq {\rm Rank}(M)+{\rm Rank}(M^\prime),
\end{equation*}
where ${\rm Rank}$ outputs the matrix rank.
}

\vspace{0.2cm}

We are ready to show
\theorem{
\label{prop:basis}
Given a rank $n$ Cartesian tensor product space, $\hat{P}^{(p_i)}\left(\hat{P}^{(p_j)}\right)^{\top}$ for all pair of paths $p_i$ and $p_j$ of which the last weights are the same, form an orthogonal basis of the equivariant design space with respect to the Cartesian space. Here, the orthogonality means that the Frobenius inner products of the different path matrices are always zero,

\begin{equation}
    \left\langle \hat{P}^{(p_i)}\left(\hat{P}^{(p_j)}\right)^{\top},\hat{P}^{(p_{i^\prime})}\left(\hat{P}^{(p_{j^\prime})}\right)^{\top}\right\rangle_F=0, 
\end{equation}
for any $i\neq i^\prime$ or $j\neq j^\prime$. Given rank $n$, the dimension of the equivariant space equals
\begin{equation} \label{eq:basisnumber} \sum\limits^{\changed{n}}_{\changed{k=0}}\bigg( \sum\limits^{\lfloor (n-k)/3 \rfloor}_{\changed{u=0}} (-1)^u\frac{n!(2n-3u-k-2)!}{u!(n-u)!(n-2)!(n-3u-k)!}\bigg)^2. 
\end{equation} 
}

\begin{proof}
    To utilize Lemma \ref{lemma:basis}, we need to prove that each of the $\hat{P}^{(p_i)}\left(\hat{P}^{(p_j)}\right)^{\top}$ is a linear mapping from a weight $l$ ICT space to some other (including itself) ICT space. Thus, these matrix (linear mapping) need to satisfy the requirements of Definition \ref{def:ICTdecomp}, except for 2), i.e., the sum of these matrices need not to be an identity matrix. Also, the orthogonality is defined based on the Frobenius inner product.
    We first prove 1), the equivariance. For arbitrary $\hat{P}^{(p_i)}\left(\hat{P}^{(p_j)}\right)^{\top}$, and group element $g$, we have
    \begin{align}
    &\hat{P}^{(p_i)}\left(\hat{P}^{(p_j)}\right)^{\top} \rho_{(R^{3})^{\otimes n}}(g)=\hat{P}^{(p_i)}D^{-1}A^{(p_j\to p_j)} \rho_{(R^{3})^{\otimes n}}(g) \nonumber\\
    =&\hat{P}^{(p_i)}\rho_{l}(g)\left(\hat{P}^{(p_j)}\right)^{\top} = \rho_{(R^{3})^{\otimes n}}(g)\hat{P}^{(p_i)}\left(\hat{P}^{(p_j)}\right)^{\top},
    \end{align}
where $A$ is a selection matrix defined in equation (\ref{eq:addingmatrix}). Then, for 3), we prove that $\hat{P}^{(p_i)}\left(\hat{P}^{(p_j)}\right)^\top$ has matrix rank $2l^\prime+1$, where $l^\prime$ is the last weight of the path $p_j$. We know that the shapes of these two matrices are both $3^n\times \left(2l^\prime+1\right) $, and the transpose has shape $\left(2l^\prime+1\right) \times 3^n$. From the rank inequality, it holds that

\begin{align}
\label{eq:upper}
    &{\rm Rank}\left(\hat{P}^{(p_i)}\left(\hat{P}^{(p_j)}\right)^\top\right)\leq {\rm min}\left({\rm Rank}\left(\hat{P}^{(p_i)}\right),{\rm Rank}\left(\left(\hat{P}^{(p_j)}\right)^\top\right)\right) \nonumber\\
    =& 2l^\prime +1.
\end{align}
On the other hand, from Lemma \ref{lemma:sylvester}, we have that

\begin{align}
\label{eq:lower}
    &{\rm Rank}\left(\hat{P}^{(p_i)}\left(\hat{P}^{(p_j)}\right)^\top\right)\geq {\rm Rank}\left(\hat{P}^{(p_i)}\right)+{\rm Rank}\left(\left(\hat{P}^{(p_j)}\right)^\top\right)-2l^\prime-1 \nonumber\\
    =& 2(2l^\prime+1)-2l^\prime -1 \nonumber\\
    =& 2l^\prime+1.
\end{align}
Equations (\ref{eq:upper}) and (\ref{eq:lower}) together give us

\begin{equation}
    {\rm Rank}\left(\hat{P}^{(p_i)}\left(\hat{P}^{(p_j)}\right)^\top\right) =2l^\prime +1,
\end{equation}
which satisfies condition 3) of the definition.

The proof of 4) is almost identical to that of Theorem \ref{theorem:ict}.

To demonstrate orthogonality, we need to reconsider the construction of basis elements in its equivalent form: the transposed change-of-basis matrix $D^\top$ multiplied by the selection matrix $A$ and then by the change-of-basis matrix $D$, as shown in equation (\ref{eq:addingmatrix2}). It holds that
    \begin{align}
    \label{eq:basisproof1}
    &\left\langle \hat{P}^{(p_i)}\left(\hat{P}^{(p_j)}\right)^{\top},\hat{P}^{(p_{i^\prime})}\left(\hat{P}^{(p_{j^\prime})}\right)^{\top}\right\rangle_F\nonumber\\
    =&\left\langle D A^{(p_j\to p_i)}D^\top,DA^{(p_{j^\prime}\to p_{i^\prime})}D^\top\right\rangle_F\nonumber\\
    =&\left\langle \underbrace{D^\top D}_I A^{(p_j\to p_i)}\underbrace{D^\top D}_I,A^{(p_{j^\prime}\to p_{i^\prime})}\right\rangle_F\nonumber\\
    =&\left\langle A^{(p_j\to p_i)},A^{(p_{j^\prime}\to p_{i^\prime})}\right\rangle_F \nonumber\\
    =& 0,
    \end{align}    
    \noindent for any $i\neq i^\prime$ or $j\neq j^\prime$. Here $D^\top D=I$ holds because of the fact that the change-of-basis matrix $D$ is an orthogonal matrix, as we stated in the proof of Theorem \ref{theorem:ict}. The dimension calculation is based on Proposition \ref{prop:multip}.
\end{proof}
\vspace{0.2cm}

It is important to note that we cannot define a basis where each basis element maps a vector to an orthogonal one, as was done for the ICT decomposition matrices in Theorem \ref{theorem:ict}. The proof is simple: By calculating the cardinality of the basis using equation (\ref{eq:basisnumber}), we find that the cardinality of basis of the rank $5$ tensor product space is $603$. However, the number of dimensions in this tensor product space is $3^5=243$, which is less than $603$. Thus, it is impossible to generate so many orthogonal vectors using this equivariant basis. \changed{Instead, we define the orthogonality
based on the Frobenius norm. Frobenius orthogonality is a natural generalization of $L_2$ orthogonality for vectors. There are two main advantages of maintaining such orthogonality. First, it is very easy to project an arbitrary matrix onto an equivariant linear mapping. Specifically, let $\{\hat{M}_{i}\}_{1\leq i\leq n}$ be the orthogonal basis, and given a matrix $M$, we can immediately obtain the projection in the equivariant space by
\begin{equation*}
    \hat{M} = \sum^{n}_{i=1} \langle \hat{M}_i,M \rangle_{F} \hat{M}_i,
\end{equation*}
which is not easy for spectral norm, for example, as it is not induced by inner product. Second, given the orthonormal basis, it can avoid the situation that the weights blow up. For instance, given a 3-d space $(x,y,z)$, suppose that our target space is the 2-d space $z=0$, we have a basis of two elements to span this space, $(1,0,0)$ and $(1,10^{-5},0)$, which requires very high weights to merely describe $(1,1,0)$. Instead, if our basis is $(1,0,0)$ and $(0,1,0)$, we can just let $(1,1,0)=1*(1,0,0)+1*(0,1,0)$. So the Frobenius orthogonality enhances the numerical stability.} Note that the ICT decomposition matrices are also part of the basis of the equivariant space.

\section{Extension to General Spaces}
\label{sec:extension}
So far, we have introduced the method to find bases for equivariant design spaces where the input and the output spaces are the same. Here, we focus on extending the method to equivariant spaces between different spaces.
\subsection{General ICT Decomposition}

It is appealing to build equivariant layers between different desired spaces. So far, we have only discussed Cartesian tensor product spaces. The spherical tensor product spaces enable us to utilize high-order many-body information while remaining flexible in terms of memory cost and inference speed. However, it is difficult to construct such a layer. For example, if the input space is $(l=4 \otimes l=5)$ and the output space is $(l=2 \otimes l=6)$, how can we parameterize the equivariant layer? More specifically, what is the basis of this layer? It is even not possible to find a spanning set by \cite{pearce2023brauer}, as it is for Cartesian tensor product spaces. Before answering this question, we first introduce the general problem of ICT decomposition and how to efficiently perform a general ICT decomposition on arbitrary input spaces.

\begin{definition}
\label{def:generaldecomp}
    Given a space $\mathcal{V}=(l_{11}\otimes\dots\otimes l_{1i_{1}})\oplus\dots\oplus(l_{k1}\otimes\dots\otimes l_{ki_{k}})$ with dimension $d$, a general irreducible decomposition is a series of matrices $\hat{H}^{(\mathcal{V};l;q)}\in \mathbbm{R}^{d\times d}$, such that:

    \vspace{0.2cm}
    
    \qquad 1) $\hat{H}^{(\mathcal{V};l;q)}$ is $O(3)$-$(\mathcal{V},\mathcal{V})$-equivariant.

    \vspace{0.2cm}
    
    \qquad 2) The summation of the decomposition equals to identity matrix, $\sum_{l,q} \hat{H}^{(\mathcal{V};l;q)} = I$.

    \vspace{0.2cm}

     \qquad 3) The rank (matrix rank) of the decomposition matrix $\hat{H}^{(\mathcal{V};l;q)}$ is $2l+1$.

\vspace{0.2cm}

    \qquad 4) $\{\hat{H}^{(\mathcal{V};l;q)}v\mid v\in \mathcal{V}\}$ has no non-trivial $O(3)$-invariant subspace.

\vspace{0.2cm}

     \qquad 5) $\{\hat{H}^{(\mathcal{V};l;q)}v\mid v\in\mathcal{V}\}\bigcap \{\hat{H}^{(\mathcal{V};l^\prime;q^\prime)}v\mid v\in\mathcal{V}\}=\{0\}$ holds for $l\neq l^\prime$ or $q\neq q^\prime$.

\vspace{0.2cm}

If $\hat{H}^{(\mathcal{V};l;q)}$ are orthogonal to each other, as Lemma \ref{lemma:ortholemma}, we further call it an orthogonal general irreducible decomposition.
\end{definition}

Intuitively, an orthogonal general ICT decomposition can be obtained using a strategy similar to that of the classic ICT decomposition. However, we cannot use the parentage scheme directly, as it only accepts a tensor product with an $(l=1)$ one step at a time, proceeding from left to right. Here, we introduce a general parentage scheme. Unlike the classic scheme, this is not a fixed procedure.

\changed{Given a space 
\begin{equation*}
\mathcal{V}=\underbrace{(l_{11}\otimes\dots\otimes l_{1i_{1}})}_{\mathcal{V}_1}\oplus\dots\oplus\underbrace{(l_{k1}\otimes\dots\otimes l_{ki_{k}})}_{\mathcal{V}_k},
\end{equation*}
we construct the general parentage scheme as follows. For each $\mathcal{V}_{1\leq t\leq j}$, we read the first weight $l_{t1}$, of this tensor product space, and place multiplicity $1$ in the position of column $1$ (sometimes we omit drawing explicit x-axis) and row $l_{t1}$. This is what we draw in the first column. After we finish drawing in column $j$, we draw connected lines between the $j$-th and $(j+1)$-th columns, and write multiplicities of each row of the $(j+1)$-th column. To do this, we traverse each row of the multiplicities $q$ in the $j$-th column. For a given multiplicity $q$ of weight (row index) $l_j$, we pick up the row with weight $l_{j+1}$ in the next column $j+1$ that satisfies $\abs{l_j-l_{t(j+1)}}\leq l_{j+1}\leq l_j+l_{t(j+1)}$, then we write multiplicity $q$ in ($j+1$, $l_{j+1}$), and draw a line connecting ($j$,$l_j$) and ($j+1$,$l_{j+1}$). After that, we write $l_{t(j+1)}$ on top of the line. If this position already has a multiplicity, we add $q$ to the existed multiplicity. We repeat this process until we traverse to the end of this tensor product space. Different schemes from different tensor product spaces are drawn independently. Clearly, this is a generalization of the classic parentage scheme. The latter can be seen as a general parentage scheme when the only tensor product space is $(l=1)^{\otimes n}$, and the previous steps exactly follow. The general parentage scheme can handle those situations that the space is complicated rather than just a tensor product of Cartesian spaces.}

For example, given a space $(3 \otimes 4 \otimes 2) \oplus (2 \otimes 3)$, we first separate the terms by the direct sum $\oplus$, resulting in $(3 \otimes 4 \otimes 2)$ and $(2 \otimes 3)$. Then, we use the general parentage scheme to decompose $(3 \otimes 4 \otimes 2)$ into irreducible spherical spaces and construct the path matrices. We begin by writing a multiplicity of one in the weight-$3$ row, in the leftmost column. Next, we take the tensor product with $(l=4)$, of which the decomposition results in multiplicities of one in $(l=7), \dots, (l=1)$. We thereby construct the path matrices $(3 \overset{4}{\to} 7), \dots, (3 \overset{4}{\to} 1)$. Note that we mark $4$ on top of the arrow, which we refer to as the bridge number. Then, $2$ is taken as the bridge number, and we accordingly generate new path matrices. Consequently, we apply the same procedure for $(2 \otimes 3)$. An important property is

\corollary[from Lemma \ref{lemma:contraction}]{\label{coro:contractgeneral} Given a tensor product space, the path matrices for \textbf{the same} general parentage scheme have orthogonal columns, and the $L_2$ norms of columns of each path matrix are the same.}

\vspace{0.2cm}

\begin{proof}
    The proof is mostly identical to that of Lemma \ref{lemma:contraction}. 
\end{proof}

Here the words \textbf{the same} suggest that we need to fix our bridge number to obtain orthogonal path matrices. As our previous example, $(3\overset{4}{\to}7\overset{2}{\to}6)$ and $(4\overset{3}{\to}7\overset{2}{\to}6)$ will generate two different path matrices. But they are actually identical up to an index permutation. The result will be the same if we accordingly permute the index of our input vector. We can multiply the path matrices with their inverse to obtain decomposition matrices, as the classic ICT decomposition does. Here we should note that the decomposition matrices of different sub-schemes separated by the direct sum cannot be generally concatenated. We can, in general, concatenate some zeros to the path matrices and their transpose with short rows and columns, respectively. Or we can just leave the decomposition matrices of different shapes as they are, and multiply the matrices with different segments of the input vector. Constructing such path matrices, and multiplying with its inverse, can contribute to a general irreducible decomposition. The proof that it satisfies Definition \ref{def:generaldecomp} is very similar to that of Theorem \ref{theorem:ict}.

\subsection{Equivariant Bases of Arbitrary Spaces}

\label{sec:equiarbitrary}

The general ICT decomposition gives us some hints. The constructed change-of-basis matrix $D^\top$ changes the basis from $(3\otimes 4\otimes 2)\oplus(2\otimes 3)$ to its decomposed irreducible spherical spaces, denoted as ${\rm irr}\left(\left(3\otimes 4\otimes 2)\oplus(2\otimes 3\right)\right)$, and its inverse $D$ changes back the basis. An orthogonal basis from spaces $\mathcal{V}_1$ to $\mathcal{V}_2$ can be obtained via the change-of-basis matrices constructed based on the general ICT decompositions of these two spaces, but we need to take care of more when considering $O(3)$. The above discussion is identical for $SO(3)$ and $O(3)$. Instead, here, the parity must get involved when we consider arbitrary spaces. First, one should note that a tensor product space, as irreducible spherical spaces, have parity $1$ or $-1$. We have kept in mind that Cartesian space is of parity $-1$, which means that if we do the reflection, then the space will be reflected accordingly. It is equivalent to an $(l=1,p=-1)$ spherical space, from Lemma \ref{lemma:firstdegree}. Meanwhile, we should remember that there is also $(l=1,p=1)$ spherical space, as a \textit{cannot-be-reflected} version of the Cartesian space. The parity of the tensor product space is only decided by the number of the componential spaces with odd parities. If the number is odd, then the tensor product space has odd parity, and \textit{vice versa}.
Meanwhile, 

\proposition{Given a tensor product space of rank $n$, the resulting spaces of the ICT decomposition are with the same parity as the tensor product space.}

\vspace{0.2cm}

\begin{proof}
    Suppose that we have a rank $n$ tensor $T$ in the tensor product space, and assume that the parities of the ICTs and the original rank $n$ tensor $T$ are not the same. If a tensor $T$ has an even parity, then reflection will not change $T$. Let $\hat{T}^{(p=-1)}_1,\dots,\hat{T}^{(p=-1)}_k$ be the ICTs with odd parity. According to Definition \ref{def:generaldecomp}, we should have 
    \begin{align}
        T &= \sum_i \hat{T}_i^{(p=-1)} + \sum_{i^\prime} \hat{T}_{i^\prime}^{(p=1)} \nonumber\\
        &=-\sum_i \hat{T}_i^{(p=-1)} + \sum_{i^\prime} \hat{T}_{i^\prime}^{(p=1)},
    \end{align}
which gives us
\begin{align}
        \sum_i \hat{T}^{(p=-1)}_i=0.
\end{align}
This contradicts with the definition that $\hat{T}^{(p=-1)}_i$ are linearly independent. The same arguments can be given for the case that $T$ has odd parity.
\end{proof}

Now we are ready to introduce the procedure for obtaining an orthogonal basis for different spaces. Given spaces $\mathcal{V}^{in}$ and $\mathcal{V}^{out}$, we want to find a basis for the equivariant design space from $\mathcal{V}^{in}$ to $\mathcal{V}^{out}$. We first find the irreducible spherical spaces of $\mathcal{V}^{in}$ and $\mathcal{V}^{out}$ via the general parentage scheme, denoted as ${\rm irr}(\mathcal{V}^{in})$ and ${\rm irr}(\mathcal{V}^{out})$, and write down the paths. Then, the paths are grouped according to the last weight in them, as well as the parities of the original tensor product spaces, denoted as $Q^{(l;p)}_{\mathcal{V}^{in}}$ and $Q^{(l;p)}_{\mathcal{V}^{out}}$. Let our equivariant space be ${\rm Hom}_{O(3)}(\mathcal{V}^{in},\mathcal{V}^{out})$, following from Lemma \ref{lemma:schur} (Schur's lemma), we have 

\begin{equation}
    {\rm dim}\ {\rm Hom}_{O(3)}(\mathcal{V}^{in},\mathcal{V}^{out}) = \sum_{l,p}\#Q^{(l;p)}_{\mathcal{V}^{in}}\#Q^{(l;p)}_{\mathcal{V}^{out}},
\end{equation}
where $\#\cdot$ represents the number of elements in the set. We drop those paths sets that the last weight and the same parities do not show in both spaces. Then, we can generate path matrices based on those remaining sets. Finally, we conclude that 

\proposition{ \label{prop:equivbasisarbitrary}
Given spaces $\mathcal{V}^{in}=\left((l^{1}_1\otimes\dots\otimes l^{1}_{k_{1}}),\hat{p}^{1}\right)\oplus\dots\oplus\left((l^{t}_1\otimes\dots\otimes l^{t}_{k_{t}}), \hat{p}^t\right)$, and $\mathcal{V}^{out}=\left((l^{\prime1}_1\otimes\dots\otimes l^{\prime1}_{k^\prime_{1}}),\hat{p}^{\prime1}\right)\oplus\dots\oplus\left((l^{\prime t}_1\otimes\dots\otimes l^{\prime t^\prime}_{k^\prime_{t^\prime}}),\hat{p}^{\prime t^\prime}\right)$, where $l$ and $\hat{p}$ are weights and parities, respectively.
\begin{align}
\left\{\hat{P}^{(p^{(out;l;\hat{p};q^\prime)} )}\left(\hat{P}^{(p^{(in;l;\hat{p};q)})}\right)^\top \quad\bigg|\quad    
 p^{(in;l;\hat{p};q)}\in Q^{(l;\hat{p})}_{\mathcal{V}^{in}},p^{(out;l;\hat{p};q^\prime)}\in Q^{(l;\hat{p})}_{\mathcal{V}^{out}},\hat{p}\in\{-1,1\}   \right\} \nonumber
 \end{align}
 forms an orthogonal basis of ${\rm Hom}_{O(3)}(\mathcal{V}^{in},\mathcal{V}^{out})$.
 } 

\vspace{0.2cm}

\begin{figure*}[htb]

  \centering
  \includegraphics[width=0.8\linewidth]{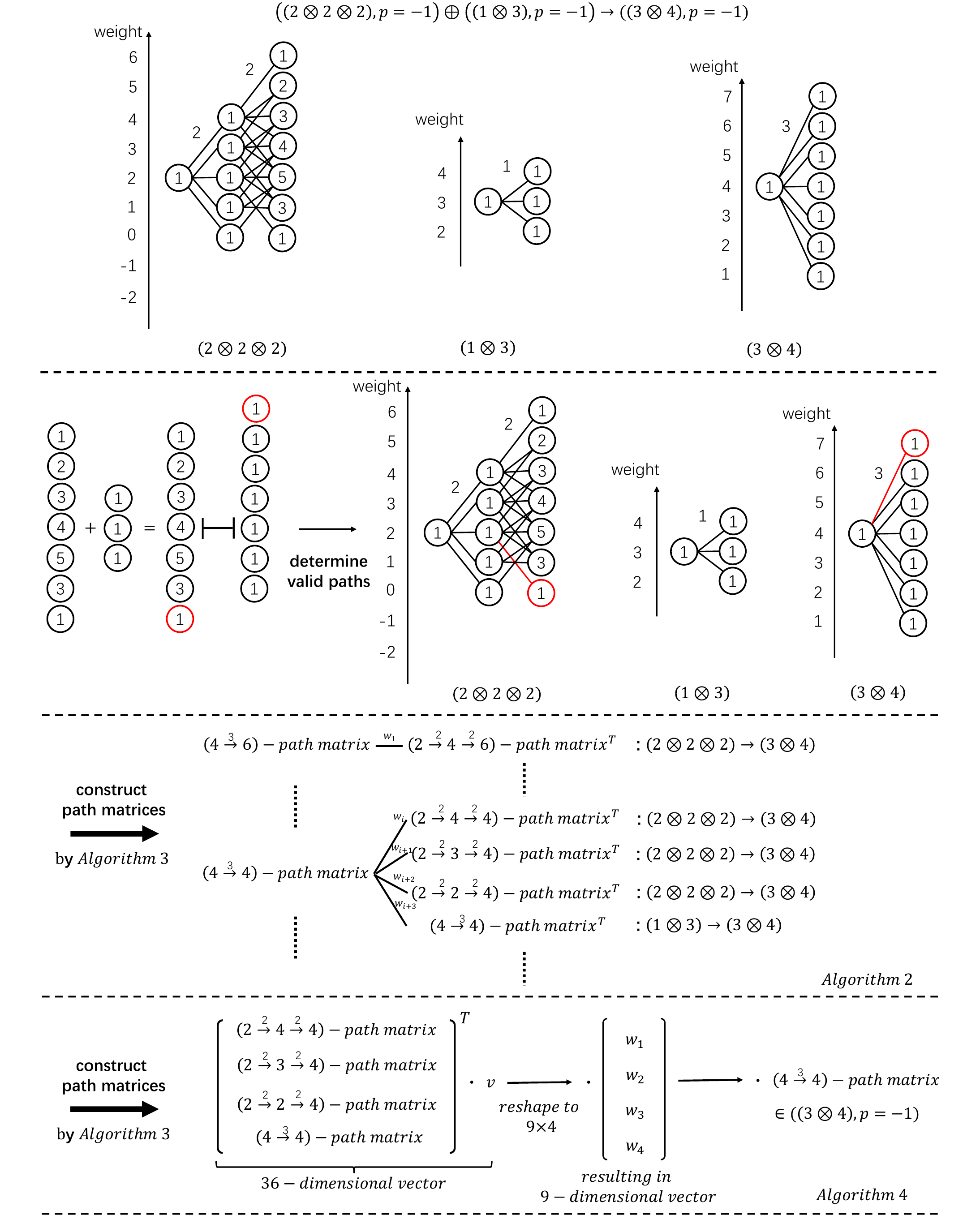}
  \caption{The general parentage scheme with the example of finding basis for $\left((2\otimes 2\otimes 2),p=-1\right)\oplus\left(\left(1\otimes 3\right),p=-1\right)\to\left((3\otimes 4),p=-1\right)$. Compared to the classic parentage scheme, we need to additionally mark the bridge number in the scheme.
  }
  \vspace{-0.8cm}
  \label{fig:ourscheme}

\end{figure*}

\begin{proof}
    The proof generally follows Theorem \ref{prop:basis}. Here, we focus on the orthogonality. Since $\hat{P}^{(p^{(out;l;\hat{p};q^\prime)})}\left(\hat{P}^{(p^{(in;l;\hat{p};q)})}\right)^\top$ are mappings between $\mathcal{V}^{in}$ and $\mathcal{V}^{out}$, they can be represented as a $\sum\limits^{t}_{j=1}\bigg(\Pi^{k_j}_{i=1}\left( 2l^{j}_{i}+1\right)\bigg)\times\sum\limits^{t^\prime}_{j=1}\left(\Pi^{k^\prime_j}_{i=1}\left( 2l^{j}_{i}+1\right)\right)$ matrix. From the mapping relations, we know that it is a block matrix as 

    \begin{align}
\begin{bmatrix}
M_{11} & \cdots & \cdots & M_{1t} \\
\vdots & \ddots & & \vdots \\
\vdots & & \ddots & \vdots \\
M_{t^\prime 1} & \cdots & \cdots & M_{t^\prime t}
\end{bmatrix}.
    \end{align}
The sub-matrices are of shape $\Pi^{k^\prime_{j^\prime}}_{i=1} \left(2l^{j^\prime}_{i}+1\right)\times\Pi^{k_j}_{i=1}\left( 2l^{j}_{i}+1 \right)$ for some $j$ and $j^\prime$. $\hat{P}^{(p^{(out;l;\hat{p};q^\prime)})}$  $\left(\hat{P}^{(p^{(in;l;\hat{p};q)})}\right)^\top$ will be in the same column for those $(l;\hat{p};q^\prime)$ belonging to the same direct sum component of $\mathcal{V}^{out}$, and the same row for those $(l;\hat{p};q)$ belonging to the same direct sum component of $\mathcal{V}^{in}$. When the non-zero values of $\hat{P}^{(p^{(out;l;\hat{p};q^\prime)})}\left(\hat{P}^{(p^{(in;l;\hat{p};q)})}\right)^\top$ appear in a block $M$, the rest of the sub-matrices will be zeros. Thus, given two elements in the basis, their corresponding Frobenius inner product must be zero if the non-zero values do not show in the same block $M$. If they appear in the same $M$, then the question is equivalent to Theorem \ref{prop:basis}, where we can prove that it is also zero.
\end{proof}


The algorithm for generating equivariant bases is presented in Algorithm \ref{algo:equivariantbasisgeneration}. Here, $CG\_decomp\_generator$ receives the current weight and the bridge number as input and outputs the possible weights according to the selection rules of CG coefficients. These rules depend on the group type. For instance, in the case of $O(3)$, $l_1$ and $l_2$ generate possible weights ranging from $|l_1 - l_2|$ to $l_1 + l_2$. 

\subsection{An Example for Obtaining a Basis of an Equivariant Space}

Here, we show an example for obtaining a basis of the equivariant space from 
\begin{equation*}
\left(\left(2\otimes2\otimes2\right),p=-1\right)\oplus\left(\left(1\otimes3\right),p=-1\right)
\end{equation*}
to 
\begin{equation*}
\left(\left(3\otimes4\right),p=-1\right). 
\end{equation*}

\begin{algorithm}[]

    \caption{Equivariant Basis Generation}
    \label{algo:equivariantbasisgeneration}
    \begin{algorithmic}
        \State \changed{\textbf{Input:} Input space 
        \begin{equation*}
\mathcal{V}_{in}=\underbrace{(l^{in}_{11}\otimes\dots\otimes l^{in}_{1i_{1}})}_{\mathcal{V}^{1}_{in}}\oplus\dots\oplus\underbrace{(l^{in}_{k1}\otimes\dots\otimes l^{in}_{ki_{k}})}_{\mathcal{V}^{k}_{in}},
\end{equation*}
and output space 
\begin{equation*}
\mathcal{V}_{out}=\underbrace{(l^{out}_{11}\otimes\dots\otimes l^{out}_{1i_{1}})}_{\mathcal{V}^{1}_{out}}\oplus\dots\oplus\underbrace{(l^{out}_{k1}\otimes\dots\otimes l^{out}_{ki_{k}})}_{\mathcal{V}^{k}_{out}}.
\end{equation*}}
\State \changed{\textbf{Step 1.} For each tensor product space $\mathcal{V}^{t}$ from $\mathcal{V}_{in}$ and $\mathcal{V}_{out}$, we draw the corresponding general parentage scheme as follows. Initialize a matrix $P=C^{000}$, and flatten the first two indices. Then, let it contract with CG tensor $C^{l_{t1}0l_{t1}}$, 
\begin{equation*}
    P^{(l_{t1})} = \mathrm{flatten}_2(P\odot C^{l_{t1}0l_{t1}}),
\end{equation*}
and from now on we maintain path $p_j$ with $p_1=l_{t1}$, where $j$ is the column index. We start from $j=1$.
}

\State \changed{\textbf{Step 2.} Suppose that we are at the $j$-th column, we pick up those available weights in the $(j+1)$-th column, i.e., weights $\abs{l_j-l_{t(j+1)}}\leq l_{j+1}\leq l_{t(j+1)}+1$, update the path by adding $l_{j+1}$ to the end of path $p_{j}$ to obtain $p_{j+1}$, and calculate the contraction,
        \begin{equation*}
            P^{(p_{j+1})} = \mathrm{flatten}_2(P^{(p_j)}\odot C^{l_{t(j+1)}l_jl_{j+1}}).
        \end{equation*}
Then, we update the column number $j \gets j+1$.}
        
\State \changed{\textbf{Step 3.} Repeat \textbf{Step 2} until $j$ reaches the number of components of the tensor product space. For each obtained path matrices $P^{(p)}$, we normalize its columns by
        \begin{equation*}
            \hat{P}^{(p)}_{*,c}= \frac{P^{(p)}_{*,c}}{ \norm{P^{(p)}_{*,c}}_2}.
        \end{equation*}}
\State \changed{\textbf{Step 4.} Repeat \textbf{Step 3} for each space, and divide the path matrices into two groups, from the input and the output space, respectively. }

\State \changed{\textbf{Step 5.} For each path matrix $\hat{P}^{(p)}_{in}$ from input space $\hat{P}_{in}$, multiply it with the transpose of one with the same termination of the path $p$, denoted by $\hat{P}^{(p^{\prime})}_{out}$ from the output space $\hat{P}_{out}$. Lastly, multiply a learnable parameter $w$, and we get $w\hat{P}^{(p^{\prime})}_{out}(\hat{P}^{(p)}_{in})^{\top}$.}
        
\State \changed{\textbf{Output:} The list of all $w\hat{P}^{(p^{\prime})}_{out}(\hat{P}^{(p)}_{in})^{\top}$.}
    \end{algorithmic}
\end{algorithm}

This process is illustrated in Figure \ref{fig:ourscheme}. First, we need to construct our general parentage schemes to identify all possible paths. For the tensor product $\left(2 \otimes 2 \otimes 2\right)$, the starting point of our scheme must be $2$. We then choose one of the remaining two $2$s as the bridge number, which gives us five possible paths, ranging from $(2 \overset{2}{\to} 0)$ to $(2 \overset{2}{\to} 4)$. Next, we select the last $2$ as the bridge number, leading to additional paths and resulting in a total of $19$ paths. We repeat this process to draw general schemes for $\left(1 \otimes 3\right)$ and $\left(3 \otimes 4\right)$, yielding $3$ and $7$ paths, respectively. Each scheme must have a parity, determined by the corresponding tensor product space; in this case, all the parities are equal to $-1$.
Next, we focus on the weights in the rightmost column of each scheme. We omit those weights that do not appear in both the input and the output spaces. In this example, the omitted weights are $0$ (which does not appear in the output space) and $7$ (which does not appear in the input space). We then construct the path matrices based on the remaining paths. The path matrices of the input and output spaces with matching last weights are grouped into pairs. The products of the path matrices of the output space and the transpose of the corresponding input space matrices form the basis elements of our equivariant maps. This procedure is described in Algorithm \ref{algo:equivariantbasisgeneration}.
Generating equivariant bases as matrices is elegant and enables efficient computations. These bases can also be efficiently obtained. However, when we tested the spanning set method proposed by \cite{pearce2023brauer}, assigning a scalar parameter to each basis and combining these bases to form a single matrix, we encountered memory issues in real-world neural network frameworks (e.g., PyTorch). This is because maintaining a large computation graph is challenging when the rank is very high. To address this issue in high-rank cases, it is preferable to store the path matrices with the same last weights for the input and output spaces separately.
For each available last weight, assuming we have $k$ paths for the input space and $k^\prime$ paths for the output space, we maintain a $k^\prime \times k$ matrix with learnable parameters. During training and inference, the inverse path matrices of the input space first convert the input into the spherical direct sum space. Then, the $k^\prime \times k$ matrices linearly combine the spherical vectors. Finally, the path matrices of the output space convert the vectors back to the output space. The entire process is also illustrated in Figure \ref{fig:ourscheme}. This approach follows from the previous proofs and aligns with the core idea of this paper: performing operations in spherical spaces is simpler and more efficient. This process is detailed in Algorithm \ref{algo:equivariantbasisgenerationlinearcombsphe}.

\begin{algorithm}[t]

    \caption{Equivariant Basis Generation---Linear Combinations in Spherical Spaces}
    \label{algo:equivariantbasisgenerationlinearcombsphe}
    \begin{algorithmic}
        \State \changed{\textbf{Input:} Input space 
        \begin{equation*}
        \mathcal{V}_{in}=\underbrace{(l^{in}_{11}\otimes\dots\otimes l^{in}_{1i_{1}})}_{\mathcal{V}^{1}_{in}}\oplus\dots\oplus\underbrace{(l^{in}_{k1}\otimes\dots\otimes l^{in}_{ki_{k}})}_{\mathcal{V}^{k}_{in}},
        \end{equation*}
        and output space 
        \begin{equation*}
        \mathcal{V}_{out}=\underbrace{(l^{out}_{11}\otimes\dots\otimes l^{out}_{1i_{1}})}_{\mathcal{V}^{1}_{out}}\oplus\dots\oplus\underbrace{(l^{out}_{k1}\otimes\dots\otimes l^{out}_{ki_{k}})}_{\mathcal{V}^{k}_{out}}.
        \end{equation*}}
        
        \State \changed{\textbf{Step 1.} Run the same \textbf{Steps 1-4} in Algorithm \ref{algo:equivariantbasisgeneration}, by which we obtain the path matrices $\hat{P}^{(p^{\prime})}_{out}$ and $\hat{P}^{(p)}_{in}$.}
        \State \changed{\textbf{Step 2.} Collect the last weight appearing in the path $p$ of each path matrix.}
        \State \changed{\textbf{Step 3.} For the input space, maintain those path matrices that have the corresponding matrices in the output space with the same last weight in the paths (one can also run this step for the output space without loss of generality).}
        \State \changed{\textbf{Step 4.} For each last weight, we calculate the number of path matrices whose path terminates at that weight, in both input and output spaces, denoted by $n$ and $m$, respectively.}
        \State \changed{\textbf{Step 5.} Create an $n\times m$ learnable weight matrix $W$ for each last weight.}
        \State \changed{\textbf{Output:} The list of $W$, $\hat{P}^{(p^{\prime})}_{out}$, and $\hat{P}^{(p)}_{in}$}
    \end{algorithmic}
\end{algorithm}

\subsection{Multi-Channel Features, Activation Functions, Normalizations, and Biases}

We have already introduced how to construct equivariant linear operations between some $\mathcal{V}^{in}$ and $\mathcal{V}^{out}$. In a real implementation, additional considerations are required to make it an equivariant layer in a modern neural network. First, if we take vectors in $\mathcal{V}^{in}$ and $\mathcal{V}^{out}$ as features, it is both standard and beneficial for a neural network to have multi-channel features. Fortunately, this is straightforward in our framework. We can simply let $\left(\mathcal{V}^{in}\right)^{\oplus K^{in}}$ and $\left(\mathcal{V}^{out}\right)^{\oplus K^{out}}$ serve as our input and output spaces, respectively. In this case, the conclusion in Section \ref{sec:equiarbitrary} can be seamlessly applied. Notably, Proposition \ref{prop:equivbasisarbitrary} already addresses the scenarios described here. Thus, for a space

\begin{align}
\underbrace{\left((l^{1}_1\otimes\dots\otimes l^{1}_{k_{1}}),p^{1}\right)}_{\mathcal{V}^{1}_{\oplus}}\oplus\dots\oplus\underbrace{\left((l^{t}_1\otimes\dots\otimes l^{t}_{k_{t}}),p^{t})\right)}_{\mathcal{V}^{t}_{\oplus}},
\end{align}
each direct sum component $\mathcal{V}^{i}_{\oplus}$ is a single-channel feature in a multi-channel view, which can be different depending on the positions. This is more general than the usual multi-channel features. 

On the other hand, activation functions introduce non-linearity to neural networks. As non-linear functions, their design space is very limited in order to preserve symmetry. However, it is completely safe if we can construct invariant linear mappings of the form $f^{inv}: v^{out} \in \mathcal{V}^{out} \mapsto c \in \mathbbm{R}$. Then, we can apply an activation function to $c$, obtaining $\phi(c)$, and multiply $\phi(c)$ with $v^{out}$ to produce the activated result $\phi(f^{inv}(v^{out}))v^{out}$. This process is equivariant, as
\begin{align}
    &\phi\left(f^{inv}\left(\rho_{\mathcal{V}^{out}}(g)\odot v^{out}\right)\right)\rho_{\mathcal{V}^{out}}(g)\odot v^{out} \nonumber\\
    = &\phi\left(\underbrace{\rho_{\mathbbm{R}}(g)}_{I}\odot f^{inv}\left( v^{out}\right)\right)\rho_{\mathcal{V}^{out}}(g)\odot v^{out} \nonumber\\
    =&\rho_{\mathcal{V}^{out}}(g)\odot\left(\phi\left( f^{inv}\left( v^{out}\right)\right)\odot v^{out} \right).
\end{align}
This process is similar to that of the spherical space-based methods \citep{batzner20223,musaelian2023learning}, but here the strategy is applicable to more general spaces.

For the normalizations, one can leverage the fact that the $L_2$-norm of the vectors are invariant to the rotation group, in general spaces. This is because the matrix representations of $O(n)$ always have an absolute determinant that is equal to one. Thus, for a vector $v\in \mathcal{V}$, we can simply calculate $\norm{v}_2$ to construct an equivariant normalization layer. The discussion on biases is mainly identical to \cite{pearce2023brauer}. We can construct equivariant basis from $\mathbbm{R}$ to arbitrary $\mathcal{V}^{out}$. In our work, we can find an orthogonal basis through Proposition \ref{prop:equivbasisarbitrary} instead of the spanning set. Here we should note that there are some situations that we cannot find such a mapping to $\mathcal{V}^{out}$. For example, $\mathbbm{R}$ with odd parity cannot map to $\mathcal{V}^{out}$ with even parity, and \textit{vice versa}. Also, $(l=0,p=1)$ cannot map to $(l=3,p=1)\otimes(l=2,p=1)$ either, as $(l=0,p=1)$ is not in their irreducible decomposition set ${\rm irr}\left( (l=3,p=1)\otimes(l=2,p=1) \right)=\{(l=5,p=1),(l=4,p=1),(l=3,p=1),(l=2,p=1),(l=1,p=1)\}$. Thus, it is convenient to always maintain $l=0$ irreducible representations for the ease of bias functions. Beyond the linear layer constraint, we can also build bias functions via multiplying $c\in \mathbbm{R}$ with each irreducible representations of unit $L_2$ norm, and adding the results with the irreducible representations.


\begin{table}[]
\centering
\setlength{\tabcolsep}{5pt}
\renewcommand{\arraystretch}{1.5}
\begin{tabular}{lcccccc}
\hline
Rank & 4               & 5   & 6   & 7   & 8 & 9     \\ \hline
EMLP \citep{finzi2021practical} & 2m52s           & 26m & OOM & OOM & OOM & OOM    \\
Algorithm \ref{algo:equivariantbasisgeneration} & \textless{}0.1s & 1s  & 3s  & 10s & OOM & OOM \\ 
Algorithm \ref{algo:equivariantbasisgenerationlinearcombsphe} & \textless{}0.1s & \textless{}0.1s  & \textless{}0.1s  & \textless{}0.1s & 0.8s & 3.4s \\ \hline
\end{tabular}
\caption{Comparison on the basis generation speed of ${\rm End}_{O(3)}\left(\left(\mathbbm{R}^{3}\right)^{\otimes n}\right)$. The out-of-memory error occurs when generating the bases using EMLP. In contrast, for Algorithm \ref{algo:equivariantbasisgeneration}, the error occurs when multiplying the learnable parameters.}
\label{tab:benchmarkbasis}
\end{table}

\section{Related Works}
\label{sec:relatedworks}

Research on equivariant neural networks is rooted in an abstract concept raised by \cite{cohen2016group}, where they extend the translation invariance of CNNs \citep{lecun1989backpropagation} to a wide range of groups. Meanwhile, graph neural networks \citep{gori2005new} are proven to be suitable tools for handling tasks involving point clouds. Their coupling, Equivariant Graph Neural Networks (EGNNs) has initiated an emerging area with very important applications to physics-informed tasks. Among the implementation, Message Passing Neural Networks (MPNNs) \citep{gilmer2017mpnn} are the most popular, where the constructed message is propagated to the neighboring points. Before that, \cite{schutt2017schnet} observe that group CNN frameworks require the points to lie on some grids and propose SchNet to flexibly handle atoms that randomly appear in space. \cite{gasteiger2020directional} further leverage directional information to enhance the model expressivity. At this point, the features utilized are just scalars and vectors, and subsequent works begin to use high-weight irreducible representations to effectively encode many-body interactions. \cite{thomas2018tfn} propose an abstract framework, the Tensor Field Network, to incorporate spherical irreducible representations into MPNNs. This abstract framework has been implemented with different designs to achieve SOTA performances on multiple popular benchmarks \citep{chmiela2017machine,ruddigkeit2012enumeration,fu2022forces}. \cite{gasteiger2021gemnet} demonstrate the universality of using such spherical irreducible representations. \cite{batzner20223} implement the Tensor Field Network with high-weight irreducible representations, showing strong performance. On the other hand, \cite{drautz2019atomic} proposes a local descriptor-based method, Atomic Cluster Expansion (ACE), to encode the atomic information via polynomial basis functions. \cite{batatia2022mace} further use high-order message-passing techniques to reduce the required number of layers utilizing spherical irreducible representations. \cite{musaelian2023learning} propose a strictly local EGNN, constructing high-weight spherical irreducible representations among edges. Recently, several studies leverage the fact that Cartesian tensor is more computationally efficient. \cite{simeon2024tensornet} first propose to use Cartesian tensor but only with rank-2 ICT decomposition, and they further apply it to charged molecules and spin states \citep{simeon2024inclusion}. \cite{cheng2024cartesian} uses high-rank Cartesian tensors but did not manipulate the irreducible components. \cite{zaverkin2024higher} successfully incorporate high-weight ICT to MPNNs but only address the highest weight of a given Cartesian tensor, lacking a full treatment of all ICTs for a given Cartesian tensor. 

Research on ICT decomposition has a long history and remains a highly active area in theoretical chemistry and chemistry physics communities, in addition to the recent strong interest from the machine learning community. The rank-$2$ ICT decomposition is well known. Since mid-20th century, \cite{coope1965irreducible} provide the explicit ICT decomposition for rank-3 Cartesian tensors. Seventeen years later, \cite{andrews1982irreducible} derives the rank-4 formula for this problem. \cite{dincckal2013orthonormal} further refines \cite{coope1965irreducible}'s results to make them orthogonal. Very recently, \cite{bonvicini2024irreducible} uses the RREF algorithm to handle the core part of finding linearly independent components in \cite{andrews1982irreducible}'s method, to successfully find rank-5 decomposition matrices. \cite{zou2001orthogonal} also propose a general method to perform the decomposition in a recursive style. However, this approach requires recursively decomposing each tensor, rather than having a decomposition tensor that can be directly applied to arbitrary tensors. Moreover, previous works rely on finding the linearly independent components of isotropic tensors, which scales in a factorial level, limiting the efficiency. \cite{snider2017irreducible} summarizes the development for ICT in his book. 

Meanwhile, researchers aim to understand what operations can be performed on a Cartesian tensor to maintain equivariance. This is one of the reasons ICTs are significant, as linear combinations of them are naturally equivariant. \cite{pearce2023brauer} leverages \cite{brauer1937algebras}'s theorem to obtain a spanning set for the equivariant design space. Based on this, \cite{pearce2023algorithm} further proposes an algorithm to generate the equivariant-preserving operation on the tensor product space. However, a direct construction of an orthogonal equivariant basis remains unknown. This work proposes a method to directly obtain orthogonal bases of equivariant design spaces.

\section{Discussion}
\label{sec:discussion}
In this section, we discuss the relationship between our work and the studies by \cite{pearce2023brauer} and \cite{finzi2021practical}. Additionally, we explore the application of our method to other groups.

\subsection{Relation to Other Works}
\label{sec:pearce-crump}

The remarkable work done by \cite{pearce2023brauer} has successfully found spanning sets for the equivariant design spaces for various groups. Here, we discuss the relation to this work. First, we recall the theorem for $O(n)$:

\begin{figure*}[tb]

  \centering
  \includegraphics[width=0.55\linewidth]{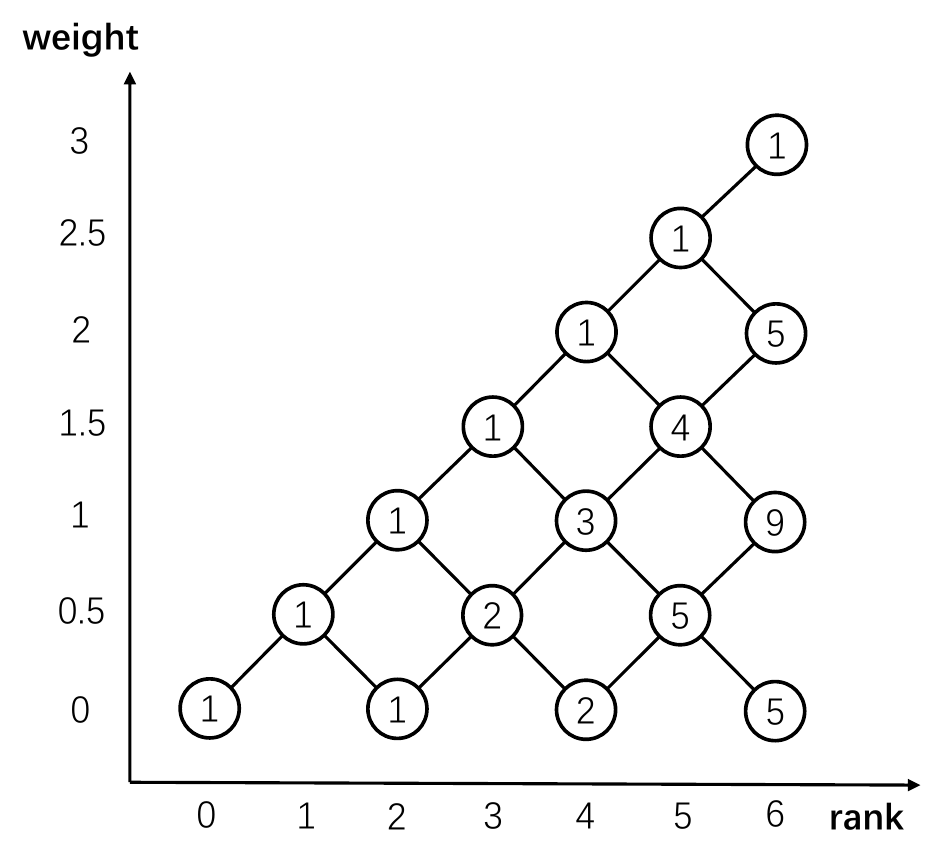}
  \vspace{-0.4cm}
  \caption{The parentage scheme of decomposition of $(l=0.5)^{\otimes n}$ for $SU(2)$ and $U(2)$, where $n$ is the rank. The numbers in the circle represent multiplicity.
  }
  
  \label{fig:su2scheme}

\end{figure*}

\theorem[\cite{pearce2023brauer}]{$U_{i_1,j_1}U_{i_2,j_2}\dots U_{i_\frac{l+k}{2}j_{\frac{l+k}{2}}}$} form a spanning set for ${\rm Hom}_{O(n)}\left(\left(\mathbbm{R}^n\right)^{\otimes l}, \left(\mathbbm{R}^n\right)^{\otimes k}\right)$, where $U$ are arbitrary matrices (also rank-2 tensors), $i_i$ and $j_i$ are different permutations of integer $1,\dots,\frac{l+k}{2}$. ${\rm Hom}_{O(n)}\left(\left(\mathbbm{R}^n\right)^{\otimes l}, \left(\mathbbm{R}^n\right)^{\otimes k}\right)=\varnothing$ when $l+k$ is odd, so the above $\frac{l+k}{2}$ is well defined.

\vspace{0.2cm}

Compared to his theorem, we identify orthogonal bases between arbitrary spaces where the input and the output spaces can be any spherical tensor product spaces, not limited to the Cartesian tensor product spaces. For example, \cite{pearce2023brauer}'s theorem cannot handle $((l=3\otimes l=4),p=-1)\to ((l=2\otimes l=2),p=-1)$, but our method can well address it, with an orthogonal basis instead of the spanning set. As he has noted, there is no equivariant operation between Cartesian spaces of odd and even ranks, respectively, while through our method we can find an orthogonal basis if their parities are the same. 

There is also a striking advance made by \cite{finzi2021practical}, who aim to find the basis of equivariant spaces. However, they adopt a numerical algorithm, EMLP, while we provide an analytical solution. As noted by \cite{pearce2023brauer}, \cite{finzi2021practical} cannot generate bases for continuous group tensor product spaces with higher ranks due to memory constraints and speed limitations. To further evaluate the performance of basis generation, we compare our method with that of \cite{finzi2021practical} using their open-sourced library. The results are presented in Table \ref{tab:benchmarkbasis}, where EMLP takes 26 minutes to compute the basis of ${\rm End}_{O(3)}\left(\left(\mathbbm{R}^{\otimes 5}\right)\right)$. In contrast, our method computes the same basis in less than one second and is capable of obtaining the orthogonal basis of ${\rm End}_{O(3)}\left(\left(\mathbbm{R}^{\otimes 8}\right)\right)$.


\subsection{Other Groups}
\label{sec:othergroups}
Although $O(3)$ is the most popular and useful group considered in equivariant neural networks, our conclusion can also be extended to $O(n)$, $SO(n)$, and $SU(n)$. The tensor product spaces can always be decomposed to smaller irreducible representation spaces, and we can always construct CG coefficients with orthogonality and normalization. The main challenge lies in obtaining the CG coefficients. Once they are obtained, the procedures outlined in this work can be repeated for those groups, and Proposition \ref{prop:equivbasisarbitrary} will give us the equivariant basis. Fortunately, we already have a robust method for computing CG coefficients for $SU(n)$, proposed by \cite{alex2011numerical}, who also provide an online service to freely compute the CG coefficients for $SU(n)$. One can generally follow the procedure of our work to extend the method to $SU(n)$. There are several relations between $SU(n)$ and certain $SO(n)$. For example, $SO(3)$ is double covered by $SU(2)$, which allows us to directly obtain CG coefficients for $SO(3)$ from those for $SU(2)$. The relationship stems from the fact that $SPIN(n)$ double covers $SO(n)$, and $SU(2)$ is isomorphic to $SPIN(3)$. Further, $SU(2)\otimes SU(2)$ is isomorphic to $SPIN(4)$, so it double covers $SO(4)$. Similarly, $SU(4)$ is isomorphic to $SPIN(6)$, and thus it double covers $SO(6)$. Consequently, the conclusion can also be extended to those $SO(n)$ that are covered by $SU(n^\prime)$ for high $n$. Further research into the Lie algebra of $SO(n)$ has the potential to directly discover the CG coefficients of $SO(n)$, rather than relying on the relationship with $SU(n)$. Additionally, $SO(n)$ can be easily converted to $O(n)$ by considering the parity. We additionally provide code for obtaining basis for equivariant spaces under $U(2)$ and $SU(2)$ in our code base. The parentage scheme is different from that of $O(3)$ and $SO(3)$, as shown in Figure \ref{fig:su2scheme}. 

\section{Conclusion}
\label{sec:conclusion}

In this work, we leverage the relationship between spherical spaces and their tensor product spaces, and construct path matrices to propose methods to: 1) find high-rank ICT decomposition matrices for $6\leq n\leq 9$, 2) find orthogonal bases for equivariant layers where the input and the output spaces are identical, and 3) generalize the conclusions to arbitrary pairs of spaces. The conclusions made in this work hold true for $O(n)$, $SO(n)$, and $SU(n)$. However, currently, we can only efficiently calculate $SU(n)$, $O(3)$, and $SO(3)$ due to the availability of known CG coefficients. This also highlights our future work, which will focus on achieving efficient calculations or closed-form formulae for $O(n)$ and $SO(n)$. Furthermore, the discovery of CG coefficients for more general groups can also help us find the basis of equivariant spaces for those groups. The contributions of this work can be directly applied to EGNN designs, which has already been shown to facilitate numerous applications in physics, chemistry, and broader biomedical fields. Examples include machine learning for force fields, drug design, blind docking, molecule generation, and many others. Additionally, there is a growing demand for equivariant methods in robotics \citep{wangequivariant}. Thus, our work has the potential to positively impact these related fields. For high-rank ICT decomposition, there are always requirements for its use in theoretical chemistry research, e.g., spectroscopies. Despite these advantages, it is essential to establish strict regulations for methods that could potentially harm the health of both people and animals.

\section{Reproducibility Statement}
\label{sec:repro}

The code to efficiently implement the theory is provided in \url{https://github.com/ShihaoShao-GH/ICT-decomposition-and-equivariant-bases}. The only external dependency packages are PyTorch and e3nn. The e3nn package can be easily installed using the command \texttt{pip install --upgrade e3nn}. The code can run on any computer with Python version $>$ 3.0.0 and PyTorch version $>$ 1.0. The code for generating the basis for $SU(2)$ does not require the e3nn package. However, the code for generating Clebsch-Gordan (CG) coefficients is borrowed from \url{http://qutip.org/docs/3.1.0/modules/qutip/utilities.html}. The CG coefficients for $SU(n)$ can be obtained from

\noindent https://homepages.physik.uni-muenchen.de/\textasciitilde vondelft/Papers/ClebschGordan/, as provided by \cite{alex2011numerical}. The code for generating these coefficients is detailed on pages 19–34 of their paper. The comparison with EMLP is based on their code base \url{https://github.com/mfinzi/equivariant-MLP}.

\section*{Acknowledgments}

Z. Lin was supported by the National Natural Science Foundation of China (No. 62276004) and the Beijing Natural
Science Foundation (No. L257007). Q. Cui was supported by grant from the National Natural Science Foundation of China (No. 62025102). S. Shao acknowledges the support from the Principal's Scholarship of Peking University.

\appendix

\changed{\section{Proofs}
\label{sec:proofs}
}

In this section, we provide proofs for auxiliary lemmas.

\setcounter{theorem}{11}

\lemma{ Given matrices $M:\mathcal{V}\to \mathcal{V}^\prime$ and $M^\prime:\mathcal{V}\to \mathcal{V}^\prime$ of the same shape, if $M^\top M^\prime=0$, then 
\begin{equation*}
    (M v)\odot(M^\prime v^\prime) = 0
\end{equation*}
for any $v,v^\prime\in \mathcal{V}$.
}

\vspace{0.2cm}

\begin{proof}
It holds true that
\begin{align}
    &(Mv)\odot(M^\prime v^\prime)  \nonumber\\ =&\sum_{i}\left(\sum_{j}M_{ij}v_{j}\right)\left(\sum_{j^\prime}M^\prime_{ij^\prime} v^\prime_{j^\prime}\right) \nonumber\\
    =&\sum_{j,j^\prime} v_{j}v^\prime_{j^\prime}\left(\sum_{i} M_{ij}M^\prime_{ij^\prime}\right). \label{eq:ortholemmaeq}
\end{align}
Assume we have $M^\top M^\prime=0$, then for any $j$ and $j^\prime$, we have

\begin{align}
    \sum_{i}M_{ij}M^\prime_{ij^\prime}=0.
\end{align}
Plugging it back to equation (\ref{eq:ortholemmaeq}), we obtain 

\begin{align}
    (Mv)\odot(M^\prime v^\prime)=\sum_{j,j^\prime} v_{j}v^\prime_{j^\prime}\left(\sum_{i} M_{ij}M^\prime_{ij^\prime}\right)=0.
\end{align}
This concludes the proof.

\end{proof}

\lemma{
Given an $(l_x\otimes l_y,l_2)$-CG matrix $\hat{C}^{l_xl_yl_2}$ and an $(l_1, l_2,l_3)$-CG tensor $C^{l_1l_2l_3}$, if we let

\begin{equation*}
    T_{\changed{i_1 i_2 i_3}} = \sum\limits_{j_1} \hat{C}^{l_xl_yl_2}_{i_2j_1} C^{l_1l_2l_3}_{i_1j_1i_3}
\end{equation*}
and we flatten $T$ into a $(2l_1+1)(2l_x+1)(2l_y+1)\times (2l_3+1)$ matrix, then the following statements hold true:

\vspace{0.1cm}

1) $T$ is $O(3)$-$(l_1\otimes l_x \otimes l_y, l_3)$-equivariant.

\vspace{0.1cm}

2) The columns of $T$ are orthogonal. 

\vspace{0.1cm}

3) The columns of $T$ have the same $L_2$-norms.

}

\vspace{0.2cm}

\begin{proof}
    We first prove 1), given an arbitrary group element $g\in O(3)$, we have that

    \begin{align}
        &\rho_{l_1\otimes l_x \otimes l_y}(g)\odot T \nonumber\\
        =&\rho_{l_1\otimes l_x \otimes l_y}(g)\odot \left(\sum_{j_1} \hat{C}^{l_xl_yl_2}_{i_2j_1} C^{l_1l_2l_3}_{i_1j_1i_3}\right) \nonumber\\
        = &\sum\limits_{i^\prime_1,i^\prime_2}R^{l_1}_{i_1i^\prime_1} R^{l_x\otimes l_y}_{i_2i^\prime_2}\left(\sum\limits_{j_1} \hat{C}^{l_xl_yl_2}_{i^\prime_2j_1} C^{l_1l_2l_3}_{i^\prime_1j_1i_3}\right) \nonumber\\
        = &\sum\limits_{j_1}\left(\sum\limits_{i^\prime_2}R^{l_x\otimes l_y}_{i_2i^\prime_2}\hat{C}^{l_xl_yl_2}_{i^\prime_2j_1}\right) \left(\sum\limits_{i^\prime_1}R^{l_1}_{i_1i^\prime_1}  C^{l_1l_2l_3}_{i^\prime_1j_1i_3}\right) \nonumber\\
        \overset{(a)}{=} &\sum\limits_{j_1}\left(\sum\limits_{j^\prime_1} \hat{C}^{l_xl_yl_2}_{i_2j^\prime_1} R^{l_2}_{j^\prime_1 j_1}  \right)\left(\sum\limits_{i^\prime_1} R^{l_1}_{i_1i^\prime_1}C^{l_1l_2l_3}_{i^\prime_1j_1i_3}\right) \nonumber\\
        = &\sum\limits_{j_1} \hat{C}^{l_xl_yl_2}_{i_2j_1}\left(\sum\limits_{i^\prime_1,j^\prime_1} R^{l_2}_{j^\prime_1 j_1}R^{l_1}_{i_1i^\prime_1}C^{l_1l_2l_3}_{i^\prime_1j^\prime_1i_3} \right) \nonumber\\
        \overset{(b)}{=} &\sum\limits_{j_1,i^\prime_3} \hat{C}^{l_xl_yl_2}_{i_2j_1} C^{l_1l_2l_3}_{i_1j_1i^\prime_3} R^{l_3}_{i^\prime_3i_3} \nonumber\\
         =& T\odot \rho_{l_3}(g),
    \end{align}
    so $T$ is $O(3)$-$(l_1\otimes l_x \otimes l_y, l_3)$-equivariant, where $R^{l}$ is the matrix representation in $l$ space, equalities (a) and (b) hold true because of Lemma \ref{lemma:equi}.

Next we prove 2), for an arbitrary pair of the indices of two columns $k\neq k^\prime$, we have
\begin{align}
    &\sum\limits_{j}T_{jk}T_{j k^\prime} \nonumber\\
    = &\sum\limits_{j_1,j_3}\left(\sum\limits_{j_2}\hat{C}^{l_xl_yl_2}_{j_1j_2}C^{l_1l_2l_3}_{j_3j_2k}\right)\left(\sum\limits_{j^\prime_2}\hat{C}^{l_xl_yl_2}_{j_1j^\prime_2}C^{l_1l_2l_3}_{j_3j^\prime_2k^\prime}\right)\nonumber\\
    = &\sum\limits_{j_2,j^\prime_2}\left(\sum\limits_{j_1}\hat{C}^{l_xl_yl_2}_{j_1j_2}\hat{C}^{l_xl_yl_2}_{j_1j^\prime_2}\right)\left(\sum_{j_3}C^{l_1l_2l_3}_{j_3j_2k}C^{l_1l_2l_3}_{j_3j^\prime_2k^\prime}\right)\nonumber\\
    \overset{(a)}{=} &\sum\limits_{j_1,j_2}\left(\hat{C}^{l_xl_yl_2}_{j_1j_2}\right)^2\left(\sum_{j_3}C^{l_1l_2l_3}_{j_3j_2k}C^{l_1l_2l_3}_{j_3j_2k^\prime}\right)\nonumber\\
    = &\norm{\hat{C}^{l_xl_yl_2}}_{col}^2\left(\sum_{j_2,j_3}C^{l_1l_2l_3}_{j_3j_2k}C^{l_1l_2l_3}_{j_3j_2k^\prime}\right)\nonumber\\
    \overset{(b)}{=} &0,
\end{align}
and thus the columns of $T$ are orthogonal. Equalities (a) and (b) follow from Lemma \ref{lemma:ortho} that $\sum\limits_{j_1}\hat{C}^{l_xl_yl_2}_{j_1j_2}\hat{C}^{l_xl_yl_2}_{j_1j^\prime_2}=0$ for $j_2\neq j^\prime_2$, and $\sum\limits_{j_2,j_3}C^{l_1l_2l_3}_{j_3j_2k}C^{l_1l_2l_3}_{j_3j_2k^\prime}=0$.

Finally, we prove 3), for an arbitrary column with index $k$, 
\begin{align}
    &\sqrt{\sum\limits_{j}T_{jk}T_{jk}}\nonumber\\
    = &\sqrt{\sum\limits_{j_1,j_3}\left(\sum\limits_{j_2}\hat{C}^{l_xl_yl_2}_{j_1j_2}C^{l_1l_2l_3}_{j_3j_2k}\right)\left(\sum\limits_{j^\prime_2}\hat{C}^{l_xl_yl_2}_{j_1j^\prime_2}C^{l_1l_2l_3}_{j_3j^\prime_2k}\right)}\nonumber\\
    = &\sqrt{\sum\limits_{j_2,j^\prime_2}\left(\sum\limits_{j_1}\hat{C}^{l_xl_yl_2}_{j_1j_2}\hat{C}^{l_xl_yl_2}_{j_1j^\prime_2}\right)\left(\sum\limits_{j_3}C^{l_1l_2l_3}_{j_3j_2k}C^{l_1l_2l_3}_{j_3j^\prime_2k}\right)}\nonumber\\
    = &\sqrt{\sum\limits_{j_2}\left(\sum\limits_{j_1}\left(\hat{C}^{l_xl_yl_2}_{j_1j_2}\right)^2\sum\limits_{j_3}\left(C^{l_1l_2l_3}_{j_3j_2k}\right)^2\right)} \nonumber\\
    = &\norm{\hat{C}^{l_xl_yl_2}}_{\rm col}\sqrt{\sum\limits_{j_2,j_3}\left(C^{l_1l_2l_3}_{j_3j_2k}\right)^2} \nonumber\\
    = &\norm{\hat{C}^{l_xl_yl_2}}_{\rm col} \norm{C^{l_1l_2l_3}}_{\rm col}.
\end{align}
From Lemma \ref{lemma:norm}, we know that it takes the same value for different $k$, which satisfies 3).

\end{proof}

\setcounter{theorem}{14}

\lemma{ The columns of different path matrices of the same path length are orthogonal. \changed{Specifically, given two path matrices $P^{(p)}$ and $P^{(p^{\prime})}$ with different paths $p$ and $p^{\prime}$, it holds that
\begin{equation*}
    \sum_{j}P^{(p)}_{ji_1} P^{(p^{\prime})}_{ji_2} = 0.
\end{equation*}}}

\vspace{0.2cm}

\begin{proof}
    We prove by induction. The first to third columns of the Parentage Scheme generate the $(0\to1\to2)$, $(0\to1\to1)$, and $(0\to1\to0)$ path matrices. We first demonstrate that the columns of these matrices are orthogonal. Since they originate from the contraction of $(0\to 1)$ path matrices and different CG tensors, we can establish a more general conclusion. Specifically, for any path matrix $P$ and two distinct CG tensors $C$ and $C^\prime$, given column indices $k$ and $k^\prime$ of these two tensors, consider the dot product of these two columns. We have
    \begin{align}
    &\sum\limits_{j_1,j_3}\left(\sum\limits_{j_2}P_{j_1 j_2}C_{j_3 j_2 k}\right)\left(\sum\limits_{j^\prime_2}P_{j_1 j^\prime_2}C^\prime_{j_3 j^\prime_2 k^\prime}\right) \nonumber\\
        = &\sum_{j_2,j^\prime_2}\left(\sum_{j_1} P_{j_1j_2}P_{j_1j^\prime_2}\right)\left(\sum\limits_{j_3} C_{j_3j_2k}C^\prime_{j_3j^\prime_2k^\prime}\right) \nonumber\\
        \overset{(a)}{=} &\left\lVert{P}\right\rVert^2_{col}\underbrace{\left(\sum\limits_{j_2,j_3} C_{j_3j_2k}C^\prime_{j_3j_2k^\prime}\right)}_{0} \nonumber\\
        \overset{(b)}{=} &0.
    \end{align}
    Equations (a) and (b) hold by Lemma \ref{lemma:ortho} in the general case. This general condition applies to the columns of path matrices of length $3$ as the initial condition. Now, we assume that the columns of path matrices of the same length are mutually orthogonal and further prove that when the length is increased by $1$, all path matrices continue to have orthogonal columns. Given two distinct path matrices $P$ and $P^\prime$ (the case of identical paths is covered above) and two CG tensors $C$ and $C^\prime$, we obtain
    
    \begin{align}
    &\sum\limits_{j_1,j_3}\left(\sum\limits_{j_2}P_{j_1 j_2}C_{j_3 j_2 k}\right)\left(\sum\limits_{j^\prime_2}P^\prime_{j_1 j^\prime_2}C^\prime_{j_3 j^\prime_2 k^\prime}\right) \nonumber\\
        = &\sum_{j_2,j^\prime_2}\underbrace{\left(\sum_{j_1} P_{j_1j_2}P^\prime_{j_1j^\prime_2}\right)}_0\left(\sum\limits_{j_3} C_{j_3j_2k}C^\prime_{j_3j^\prime_2k^\prime}\right) \nonumber\\
        = &0.
    \end{align}
    The final equality holds due to our assumption. Therefore, the columns of different path matrices of the same path length are orthogonal.
\end{proof}

\section{Complexity Analysis}
\label{sec:complexityanalysis}
For the complexity with respect to the dimension, our ICT decomposition has polynomial complexity, while \cite{bonvicini2024irreducible} has $\mathcal{O}\left((\log d)!\right)$ complexity, which is higher than any polynomial complexity. Here is the derivation. Let $d$ denotes the dimension and recall that Stirling's approximation tells us:

\begin{align}
    &(\log d)!\nonumber\\
    &\approx \left(\log d\right)^{\log d} \exp\left(-\log d\right) \sqrt{2\pi \log d} \nonumber\\
    &> \left(\log d\right)^{\log d} \exp\left(-\log d\right) \nonumber\\
    &=\exp\left(\log\left(\left(\log d\right)^{\log d}\right)-\log d\right)  \nonumber\\
    &=\exp\left({\log d}\log\left(\log d\right)-\log d\right).
\end{align}
Take the logarithm of this equation, we obtain

\begin{align}
\label{eq:log}
    &\log\left(\left(\log d\right)!\right) \nonumber\\
    &=\log d \log\left(\log d\right) - \log d \nonumber\\
    &= \log d\left(\log\left(\log d\right)-1\right).
\end{align}
On the other hand, taking the logarithm of an arbitrary polynomial $d^k$, where $k$ is constant, gives us

\begin{equation}
\label{eq:polynom}
    \log\left(d^k\right) = k\log d.
\end{equation}
Consequently, the ratio of equations (\ref{eq:log}) and (\ref{eq:polynom}) is

\begin{align}
    &\frac{\log\left(\left(\log d\right)!\right)}{\log\left(d^k\right)} \nonumber\\
    &=\frac{\log d\left(\log\left(\log d\right)-1\right)}{k\log d} \nonumber\\
    &=\frac{\left(\log\left(\log d\right)-1\right)}{k}.
\end{align}
As $d$ increases, $\frac{\log\left(\left(\log d\right)!\right)}{\log\left(d^k\right)}$ must be eventually higher than $1$, thus $\mathcal{O}\left(\left(\log d\right)!\right)$ grows faster than any polynomial.

\section{\changed{An Example for Generating Rank-$2$ ICT Decomposition Matrices}}
\label{sec:examplerank2}
\changed{Since we only focus on the rank-$2$ ICT decomposition, it suffices to consider the portion of Figure 2 with ranks and weights in $\{0,1,2\}$. As in Algorithm 1, we first initialize $(0)$-path matrices $P^{(p_0)}=C^{000}$, and flatten the first two indices. Maintain the path $p_0=(0)$, the current column number of the parentage scheme $i=0$, and the current weight $l_0=0$. Then, we iterate Step 2 for those available weights in the $(i+1)$-th column, i.e., weights $\abs{l_i-1}\leq l_{i+1}\leq l_i+1$ for $l_i\neq 0$. We then update the path by adding $l+1$ to the end of path $p_{i}$ to obtain $p_{i+1}$, and calculating the contraction,
        \begin{equation*}
            P^{(p_{i+1})} = \mathrm{flatten}_2(P^{(p_i)}\odot C^{1l_il_{i+1}}).
        \end{equation*}
Then, we update the column number $i \gets i+1$. Specifically, since we initialize that $i=0$, we have $i+1=1$, following the rule that weights $l_{i+1}$ must satisfy $\abs{l_i-1}\leq l_{i+1}\leq l_i+1$, the only possible $l_{i+1}$ is $1$. Thus, we have
\begin{equation*}
            P^{(0\to 1)} = {\rm flatten}_2(P^{(0)}\odot C^{101}).
\end{equation*}
Then, we have $l_1 =1$, and $l_2$ has three choices, $0$, $1$, and $2$. When $l_2 = 2$, we have
\begin{equation*}
            P^{(0\to 1\to 2)} = {\rm flatten}_2(P^{(0\to 1)}\odot C^{112});
\end{equation*}
When $l_2 = 1$, we have
\begin{equation*}
            P^{(0\to 1\to 1)} = {\rm flatten}_2(P^{(0\to 1)}\odot C^{111});
\end{equation*}
When $l_2 = 0$, we have
\begin{equation*}
            P^{(0\to 1\to 0)} = {\rm flatten}_2(P^{(0\to 1)}\odot C^{110}).
\end{equation*}
Then, we normalize the above three path matrices,
\begin{equation*}
            \hat{P}^{(0\to 1\to 2)}_{*,c}= \frac{P^{(0\to 1\to 2)}_{*,c}}{ \norm{P^{(0\to 1\to 2)}_{*,c}}_2}, \quad \hat{P}^{(0\to 1\to 1)}_{*,c}= \frac{P^{(0\to 1\to 1)}_{*,c}}{ \norm{P^{(0\to 1\to 1)}_{*,c}}_2}, \quad\hat{P}^{(0\to 1\to 0)}_{*,c}= \frac{P^{(0\to 1\to 0)}_{*,c}}{ \norm{P^{(0\to 1\to 0)}_{*,c}}_2}.
        \end{equation*}
The matrix multiplications then give us all three rank-$2$ ICT decomposition matrices,
\begin{equation*}
    H^{(2;2;1)} = \hat{P}^{(0\to 1\to 2)}\cdot (\hat{P}^{(0\to 1\to 2)})^{\top},
\end{equation*}
\begin{equation*}
    H^{(2;1;1)} = \hat{P}^{(0\to 1\to 1)}\cdot (\hat{P}^{(0\to 1\to 1)})^{\top},
\end{equation*}
\begin{equation*}
    H^{(2;0;1)} = \hat{P}^{(0\to 1\to 0)}\cdot (\hat{P}^{(0\to 1\to 0)})^{\top}.
\end{equation*}}

\newpage

\section{Visualization of Decomposition Matrices}
\label{sec:visualization}
\FloatBarrier

\begin{figure*}[h]

  \centering
  \includegraphics[width=0.7\linewidth]{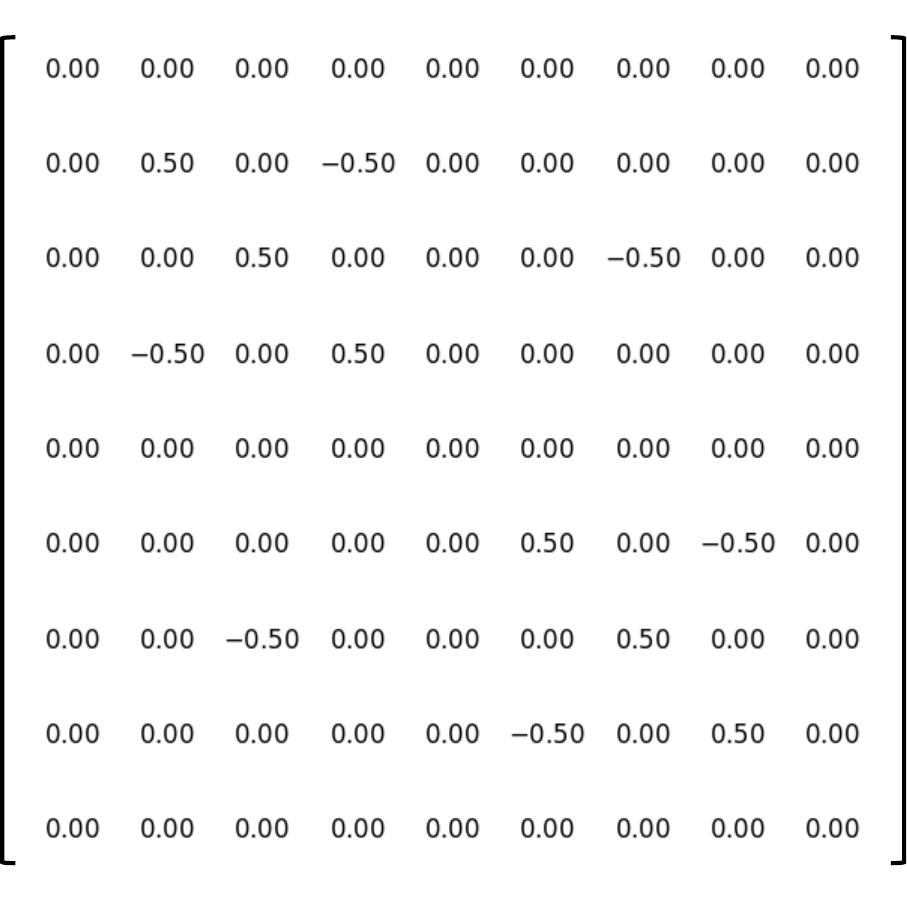}
  \caption{\changed{Decomposition matrix for rank-2 ICT with path $(0\to 1\to 1)$.}
  }
  \label{fig:011}
\vspace{-0.5cm}
\end{figure*}

\begin{figure*}[h]

  \centering
  \includegraphics[width=0.7\linewidth]{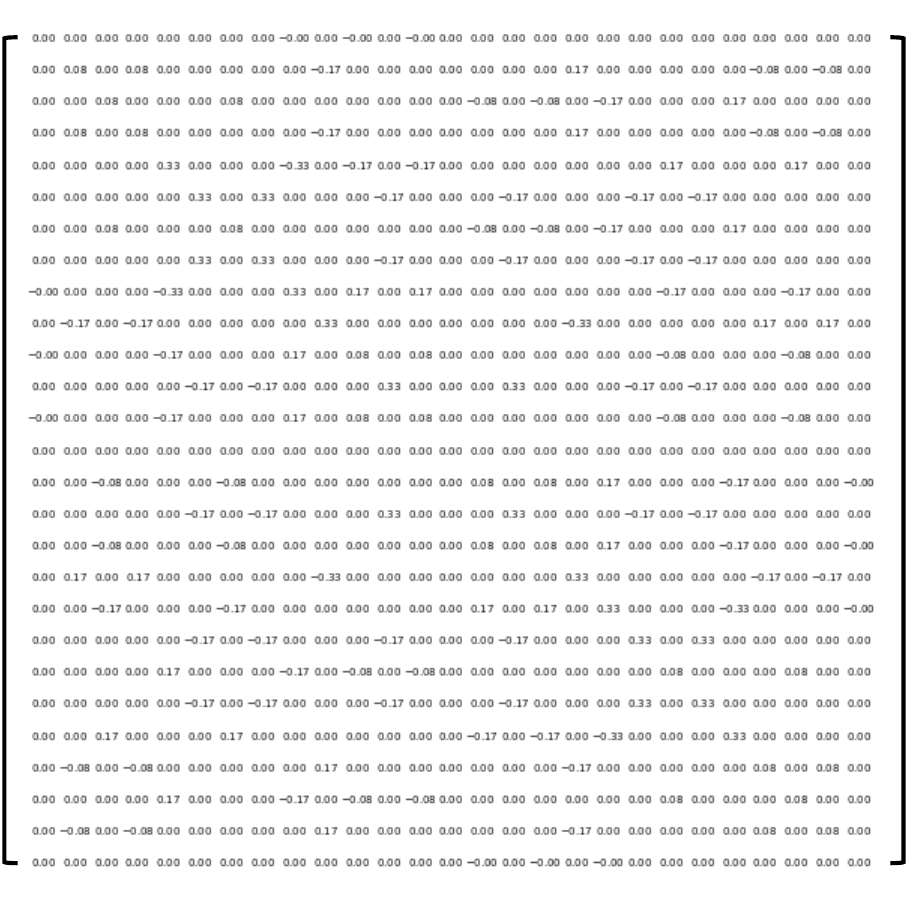}
  \caption{\changed{Decomposition matrix for rank-3 ICT with path $(0\to 1\to 2\to 2)$.}
  }
  \label{fig:0122}
\vspace{-0.5cm}
\end{figure*}

\begin{figure*}[h]

  \centering
  \includegraphics[width=0.7\linewidth]{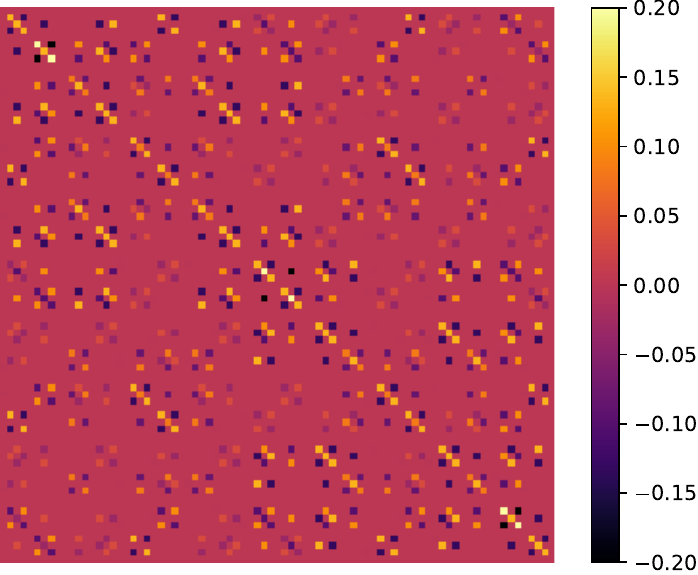}
  \caption{Decomposition matrix for \changed{rank-4} ICT with path $(\changed{0\to} 1\to 1\to 2 \to 3)$.
  }
  \label{fig:benchmarkchignolin}
\vspace{-0.5cm}
\end{figure*}

\begin{figure*}[h]

  \centering
  \includegraphics[width=0.7\linewidth]{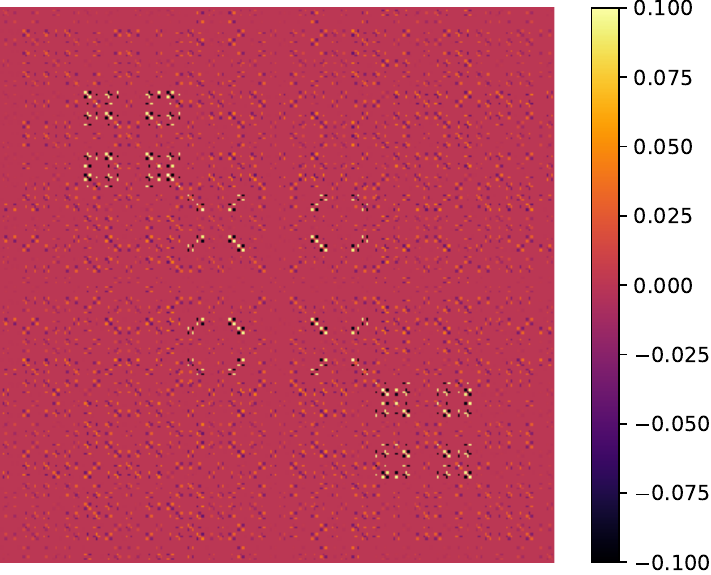}
  \caption{Decomposition matrix for \changed{rank-5} ICT with path $(\changed{0\to} 1\to 1\to 2 \to 3 \to 3)$.
  }
  \label{fig:benchmarkchignolin}

\end{figure*}

\begin{figure*}[h]

  \centering
  \includegraphics[width=0.7\linewidth]{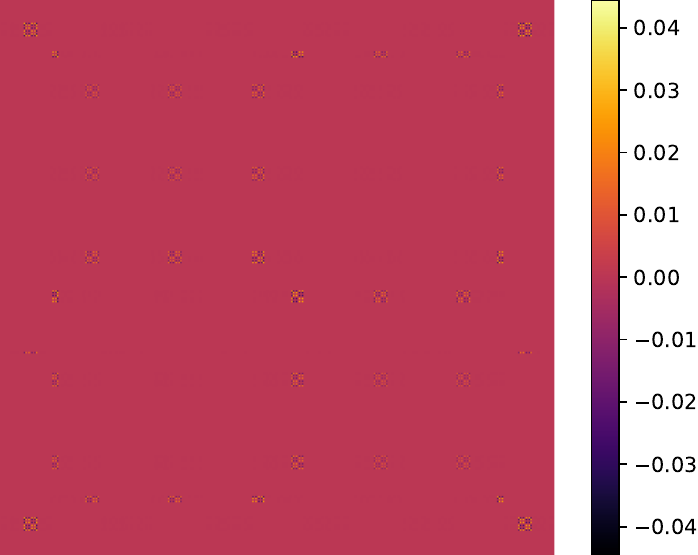}
  
  \caption{Decomposition matrix for \changed{rank-6} ICT with path $(\changed{0\to}1\to 1\to 0 \to 1 \to 2 \to 1)$.
  }
  \label{fig:benchmarkchignolin}

\end{figure*}

\begin{figure*}[h]

  \centering
  \includegraphics[width=0.7\linewidth]{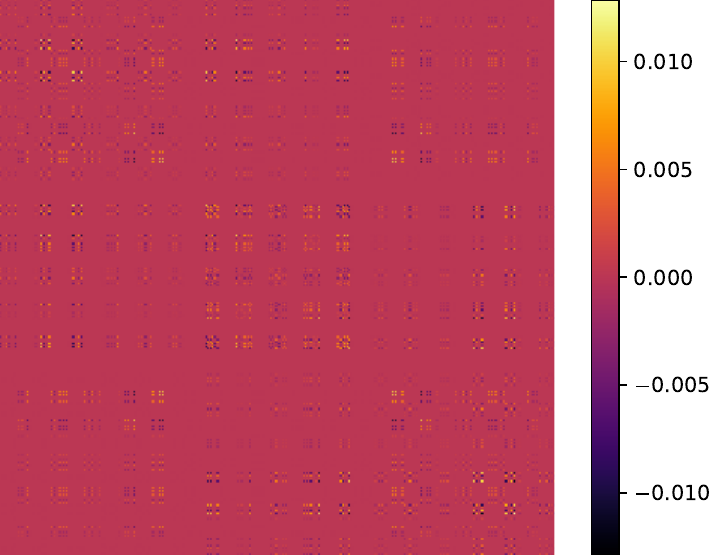}
  \caption{Decomposition matrix for \changed{rank-7} ICT with path $(\changed{0\to}1\to 2\to 3 \to 2 \to 2 \to 1 \to 1)$.
  }
  \label{fig:benchmarkchignolin}

\end{figure*}

\begin{figure*}[h]

  \centering
  \includegraphics[width=0.7\linewidth]{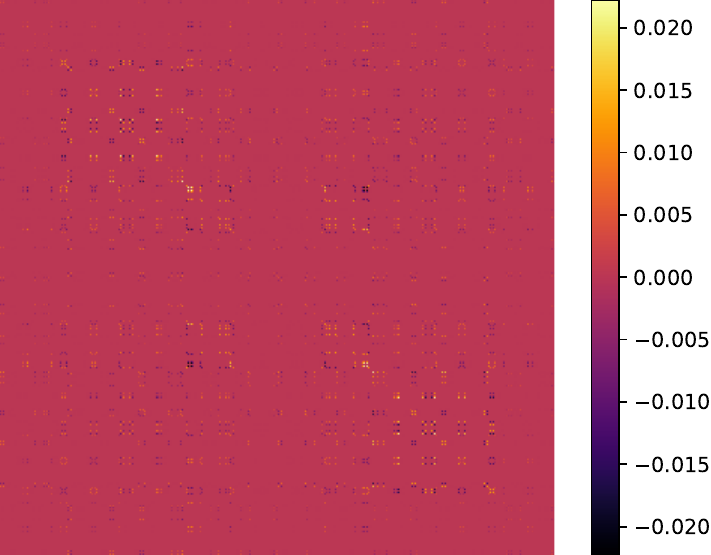}
  \caption{Decomposition matrix for \changed{rank-7} ICT with path $(\changed{0\to}1\to 1\to 1 \to 2 \to 1 \to 2 \to 2)$.
  }
  \label{fig:benchmarkchignolin}

\end{figure*}

\begin{figure*}[h]

  \centering
  \includegraphics[width=0.7\linewidth]{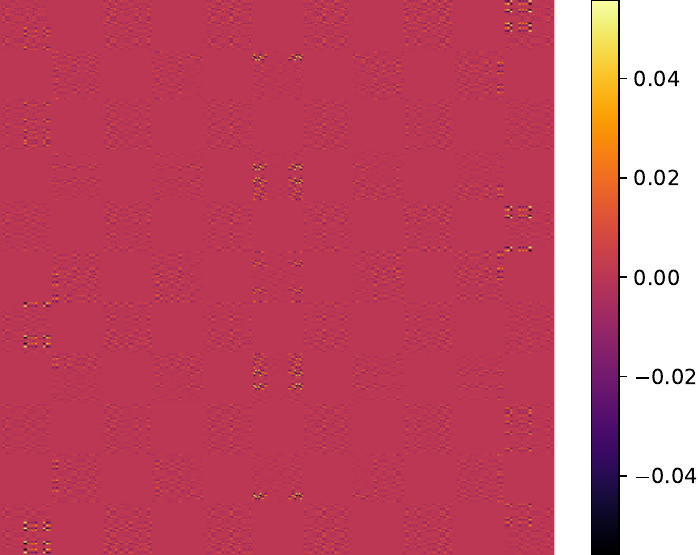}
  \caption{Equivariant basis matrix of ${\rm End}_{O(3)}\left(\left(\mathbbm{R}^{3}\right)^{\otimes 6}\right)$ with paths $(1\to 2\to 2 \to 3\to3\to3)$ and $(1\to 1\to 2 \to 3\to4\to3)$.
  }
  \label{fig:benchmarkchignolin}

\end{figure*}

\begin{figure*}[h]

  \centering
  \includegraphics[width=0.7\linewidth]{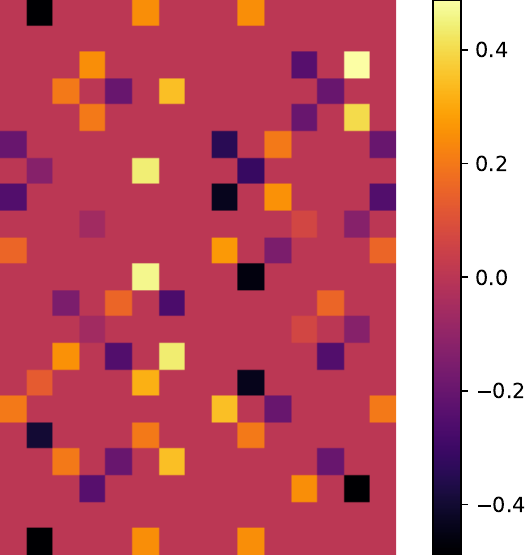}
  \caption{Equivariant basis matrix of ${\rm Hom}_{O(3)}\left(\left(l=1\otimes l=2\right),\left(l=1\otimes l=3\right)\right)$ with paths $(1\overset{2}{\to} 2)$ and $(1\overset{3}{\to} 2)$.
  }
  \label{fig:benchmarkchignolin}

\end{figure*}

\begin{figure*}[h]

  \centering
  \includegraphics[width=0.7\linewidth]{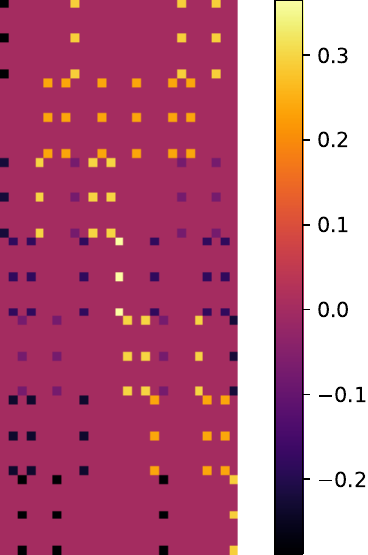}
  \caption{Equivariant basis matrix of 
  ${\rm Hom}_{SU(2)}\left(\left(0.5\otimes 0.5\otimes 1.5\right),\left(0.5\otimes 0.5\otimes 0.5\right)\right)$ with paths $(0.5\overset{0.5}{\to} 0\overset{1.5}{\to} 1.5)$ and $(0.5\overset{0.5}{\to} 1\overset{0.5}{\to} 1.5)$.
  }
  \label{fig:benchmarkchignolin}

\end{figure*}

\clearpage
\bibliography{sample}

\end{document}